\newtheorem{prop}{Proposition}
\newenvironment{proof}
{\par\noindent{\bf Proof.}} 
{\hfill$\scriptstyle\blacksquare$}
\title{On Power Laws in Deep Ensembles}
\author{Ekaterina Lobacheva$\bf{}^{1}$, Nadezhda Chirkova$\bf{}^{1}$, Maxim Kodryan$\bf{}^{1}$, Dmitry Vetrov$\bf{}^{1,2}$\\
	${}^1$Samsung-HSE Laboratory, National Research University Higher School of Economics\\
	${}^2$Samsung AI Center Moscow\\
	Moscow, Russia\\
	{\tt \{elobacheva,nchirkova,mkodryan,dvetrov\}@hse.ru} }
\begin{document}

\maketitle

\begin{abstract}
  Ensembles of deep neural networks are known to achieve state-of-the-art performance in uncertainty estimation and lead to accuracy improvement. In this work, we focus on a classification problem and investigate the behavior of both non-calibrated and calibrated negative log-likelihood (CNLL) of a deep ensemble as a function of the ensemble size and the member network size. We indicate the conditions under which CNLL follows a power law w.\,r.\,t.\,ensemble size or member network size, and analyze the dynamics of the parameters of the discovered power laws. Our important practical finding is that one large network may perform worse than an ensemble of several medium-size networks with the same total number of parameters (we call this ensemble a memory split). Using the detected power law-like dependencies, we can predict (1) the possible gain from the ensembling of networks with given structure, (2) the optimal memory split given a memory budget, based on a relatively small number of trained networks.
\end{abstract}

\section{Introduction}
Neural networks provide state-of-the-art results in a variety of machine learning tasks, however, several neural network's aspects complicate their usage in practice, including overconfidence~\cite{deepens}, vulnerability to adversarial attacks~\cite{adv_attacks}, and overfitting~\cite{overfitting}. One of the ways to compensate these drawbacks is using deep ensembles, i.\,e. the ensembles of neural networks trained from different random initialization~\cite{deepens}. In addition to increasing the task-specific metric, e.\,g.\,accuracy, the deep ensembles are known to improve the quality of uncertainty estimation, compared to a single network. There is yet no consensus on how to measure the quality of uncertainty estimation. \citet{pitfalls} consider a wide range of possible metrics and show that the calibrated negative log-likelihood (CNLL) is the most reliable one because it avoids the majority of pitfalls revealed in the same work.

Increasing the size $n$ of the deep ensemble, i.\,e.\,the number of networks in the ensemble, is known to improve the performance~\cite{pitfalls}. The same effect holds for increasing the size $s$ of a neural network, i.\,e. the number of its parameters. Recent works~\citep{double_descent2, double_descent} show that even in an extremely overparameterized regime, increasing $s$ leads to a higher quality. These works also mention a curious effect of non-monotonicity of the test error w.\,r.\,t.\,the network size, called double descent behaviour.

In figure~\ref{fig:motivation}, left, we may observe the saturation and stabilization of quality with the growth of both the ensemble size $n$ and the network size $s$. The goal of this work is to study the asymptotic properties of CNLL of deep ensembles as a function of $n$ and $s$. We investigate under which conditions and w.\,r.\,t.\,which dimensions the CNLL follows a power law for deep ensembles in practice. In addition to the horizontal and vertical cuts of the diagram shown in figure~\ref{fig:motivation}, left, we also study its diagonal direction, which corresponds to the increase of the total parameter count.

\begin{figure}
  \begin{center}
\centerline{
 \begin{tabular}{cc}
  \includegraphics[width=0.31\textwidth]{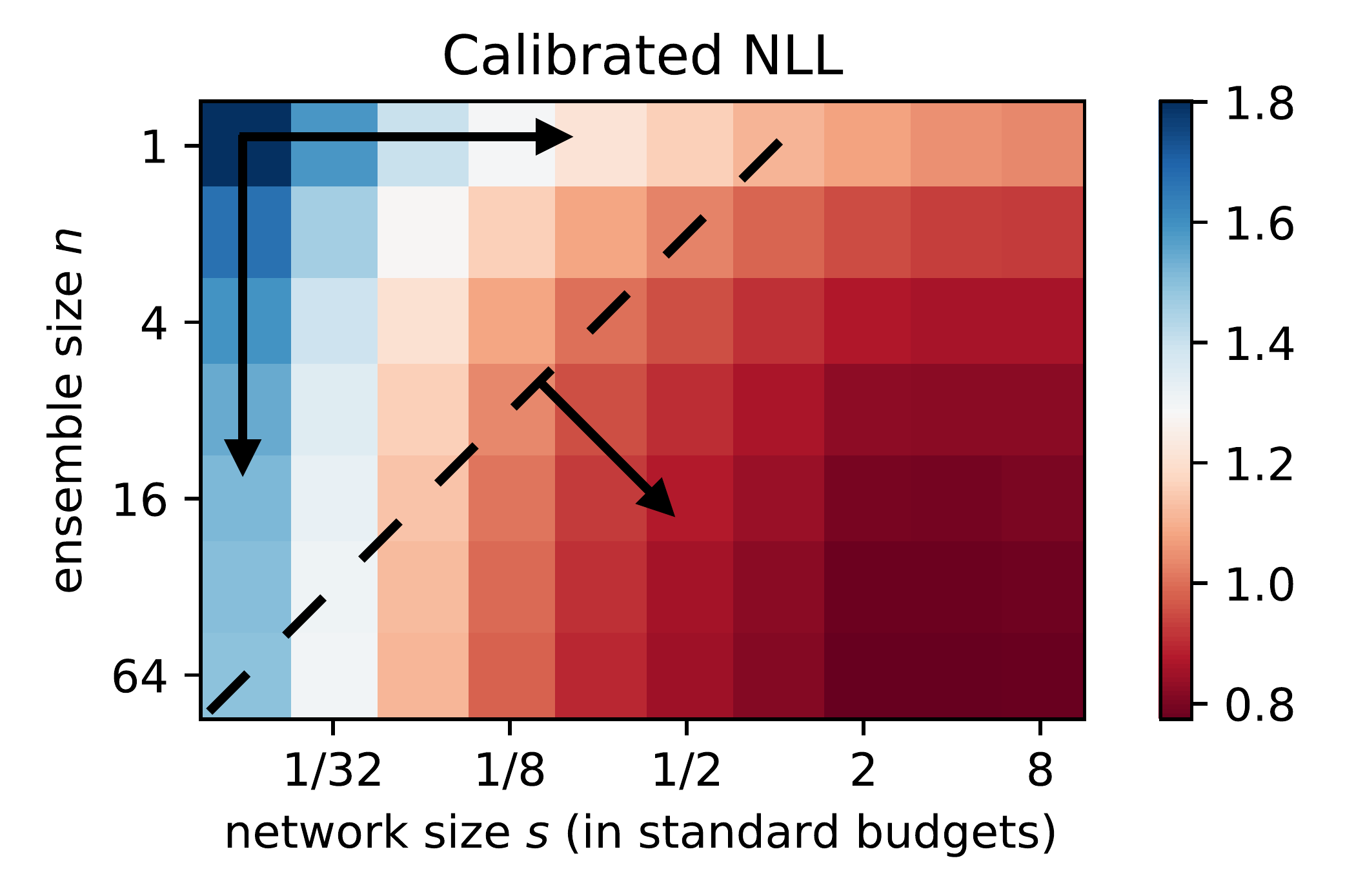}&
  \includegraphics[width=0.65\textwidth]{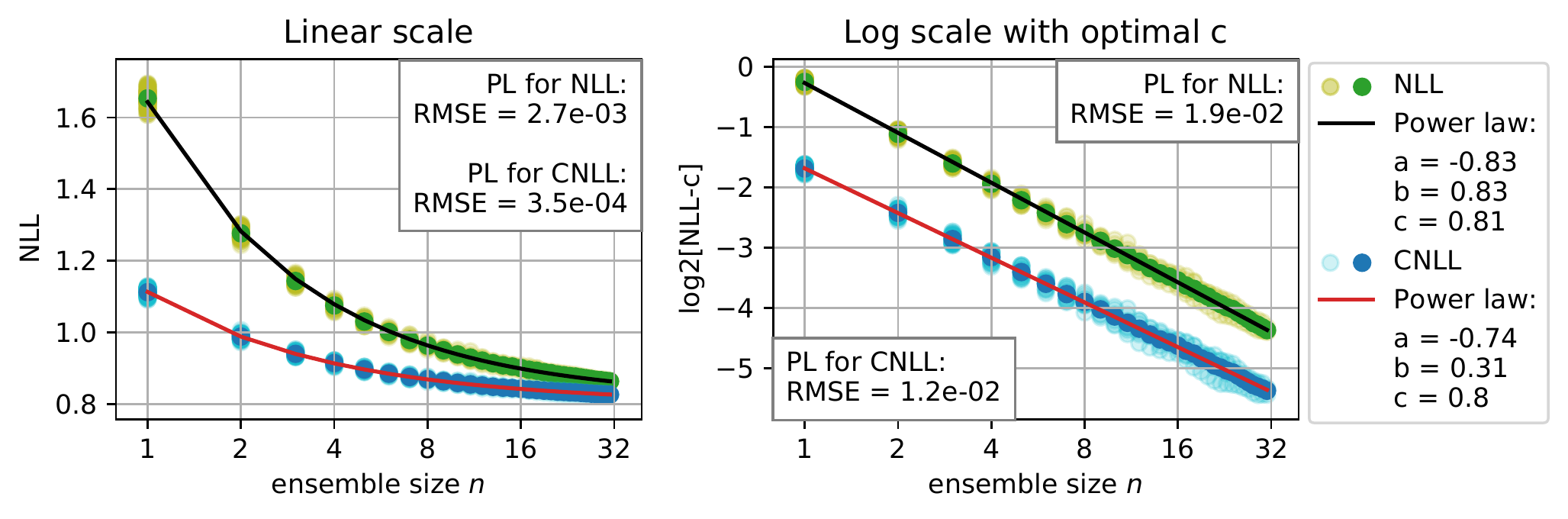}
  \end{tabular}}
  \caption{Non-calibrated NLL and CNLL of VGG on CIFAR-100. Left: the $(n, s)$-plane for the CNLL. Middle and right: non-calibrated $\mathrm{NLL}_n$ and $\mathrm{CNLL}_n$ can be closely approximated with a power law (VGG of the commonly used size as an example) .}
  \label{fig:motivation}
  \end{center}
\end{figure}

The power-law behaviour of deep ensembles has previously been touched in the literature. \citet{scaling_description} consider simple shallow architectures and reason about the power-law behaviour of the test error of a deep ensemble as a function of $n$ when $n \rightarrow \infty$, and of a single network as a function of $s$ when $s \rightarrow \infty$. \citet{kaplan2020scaling, rosenfeld2019constructive} investigate the behaviour of \textit{single} networks of modern architectures and empirically show that their NLL and test error follow power laws w.\,r.\,t.\,the network size $s$. In this work, we perform a broad empirical study of power laws in deep ensembles, relying on the practical setting with properly regularized, commonly used deep neural network architectures. Our main contributions are as follows:

\begin{enumerate}
    \item for the practically important scenario with NLL calibration, we derive the conditions under which CNLL of an ensemble follows a power law as a function of $n$ when $n \rightarrow \infty$;
    \item we empirically show that, in practice, the following dependencies can be closely approximated with a power law on the \emph{whole} considered range of their arguments: 
(a) CNLL of an ensemble as a function of the ensemble size $n \geqslant 1$; 
(b) CNLL of a single network as a function of the network size $s$; 
(c) CNLL of an ensemble as a function of the total parameter count;
    \item based on the discovered power laws, we make several practically important conclusions regarding the use of deep ensembles in practice, e.\,g.\,using a large single network may be less beneficial than using a so-called memory split --- an ensemble of several medium-size networks of the same total parameter count;
    \item we show that using the discovered power laws for $n \geqslant 1$, and having a small number of trained networks, we can predict the CNLL of the large ensembles and the optimal memory split for a given memory budget. 
\end{enumerate}

{\bf Definitions and notation.$\quad$} In this work, we treat a power law as a family of functions $\mathrm{PL}_m = c + b m^a$, $m=1, 2, 3,\dots$; $a<0$, $b \in \mathbb{R}$, $c \in \mathbb{R}$ are the parameters of the power law. Parameter $c  =  \lim_{m\rightarrow \infty} (c + b m^a) = \lim_{m\rightarrow \infty} \mathrm{PL}_m \overset{\mathrm{def}}{=} \mathrm{PL}_{\infty}$ reflects the asymptote of the power law. Parameter $b=c-\mathrm{PL}_1=\mathrm{PL}_\infty - \mathrm{PL}_1$ reflects the difference between the starting point of the power law and its asymptote. Parameter $a$ reflects the speed of approaching the asymptote. In the rest of the work, $\mathrm{(C)NLL}_m$ denotes (C)NLL as a function of $m$.

\section{Theoretical view}
\label{sec:our_theory}

The primary goal of this work is to perform the empirical study of the conditions under which NLL and CNLL of deep ensembles follow a power law. Before diving into a discussion about our empirical findings, we first provide a theoretical motivation for anticipating power laws in deep ensembles, and discuss the applicability of this theoretical reasoning to the practically important scenario with calibration.

We begin with a theoretical analysis of the non-calibrated NLL of a deep ensemble as a function of ensemble size $n$. Assume that an ensemble consists of $n$ models that return independent identically distributed probabilities $p_{\mathrm{obj}, i}^* \in [0, 1], i=1,\dots,n$ of the correct class for a single object from the dataset $\mathcal{D}$. Hereinafter, operator $*$ denotes retrieving the prediction for the correct class. We introduce the \emph{model-average} NLL of an ensemble of size $n$ for the \emph{given object}:
\begin{equation}
    \label{eq:nll_ens_single}
    \mathrm{NLL}_{n}^{\mathrm{obj}} = -\mathbb{E} \log \left( \frac{1}{n} \sum_{i=1}^n p_{\mathrm{obj}, i}^* \right).
\end{equation}
The expectation in~\eqref{eq:nll_ens_single} is taken over all possible models that may constitute the ensemble (e.\,g. random initializations). The following proposition describes the asymptotic power-law behavior of $\mathrm{NLL}_{n}^{\mathrm{obj}}$ as a function of the ensemble size.

\begin{prop}
\label{prop:pl_ens}
Consider an ensemble of $n$ models, each producing independent and identically distributed probabilities of the correct class for a given object: $p_{\mathrm{obj}, i}^* \in \left[\epsilon_{\mathrm{obj}}, 1\right]$, $\epsilon_{\mathrm{obj}} > 0$, $i=1,\dots,n$. Let $\mu_{\mathrm{obj}} = \mathbb{E} p_{\mathrm{obj}, i}^*$ and $\sigma_{\mathrm{obj}}^2 = \mathbb{D} p_{\mathrm{obj}, i}^*$ be, respectively, the mean and variance of the distribution of probabilities. Then the model-average NLL of the ensemble for a single object can be decomposed as follows:
\begin{equation}
    \label{eq:nll_ens_single_pl}
     \mathrm{NLL}_{n}^{\mathrm{obj}} = \mathrm{NLL}_{\infty}^{\mathrm{obj}} + \frac 1 n \frac{\sigma_\mathrm{obj}^2}{2 \mu_\mathrm{obj}^2} + \mathcal{O}\left(\frac{1}{n^2}\right).
\end{equation}
where $\mathrm{NLL}_{\infty}^{\mathrm{obj}} = -\log \left( \mu_\mathrm{obj} \right)$ is the ``infinite'' ensemble NLL for the given object.
\end{prop}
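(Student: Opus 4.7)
The plan is to Taylor expand $\log$ around the population mean $\mu_{\mathrm{obj}}$ and take expectations, exploiting the iid assumption to kill odd terms at the $1/n$ level. The lower bound $\epsilon_{\mathrm{obj}}>0$ will be used only to control the Taylor remainder.

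First I would introduce $\bar p = \frac{1}{n}\sum_{i=1}^n p_{\mathrm{obj},i}^*$. Since each $p_{\mathrm{obj},i}^*\in[\epsilon_{\mathrm{obj}},1]$, we have $\bar p\in[\epsilon_{\mathrm{obj}},1]$ almost surely, so $\log\bar p$ is well defined and $\log$ is smooth on a compact interval containing $\mu_{\mathrm{obj}}$ and $\bar p$. By iid, $\mathbb{E}\bar p=\mu_{\mathrm{obj}}$ and $\mathbb{E}(\bar p-\mu_{\mathrm{obj}})^2=\sigma_{\mathrm{obj}}^2/n$.

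Next I would write the Taylor expansion of $\log$ around $\mu_{\mathrm{obj}}$ up to third order with an explicit Lagrange remainder:
\begin{equation*}
\log\bar p=\log\mu_{\mathrm{obj}}+\frac{\bar p-\mu_{\mathrm{obj}}}{\mu_{\mathrm{obj}}}-\frac{(\bar p-\mu_{\mathrm{obj}})^2}{2\mu_{\mathrm{obj}}^2}+\frac{(\bar p-\mu_{\mathrm{obj}})^3}{3\mu_{\mathrm{obj}}^3}+R_4(\bar p),
\end{equation*}
with $|R_4(\bar p)|\le \tfrac{1}{4\xi^4}(\bar p-\mu_{\mathrm{obj}})^4$ for some $\xi$ between $\bar p$ and $\mu_{\mathrm{obj}}$. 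Because $\xi\ge\epsilon_{\mathrm{obj}}>0$, this gives $|R_4(\bar p)|\le C_\epsilon(\bar p-\mu_{\mathrm{obj}})^4$ with $C_\epsilon=1/(4\epsilon_{\mathrm{obj}}^4)$. Taking expectations, the linear term vanishes, the quadratic term contributes $-\sigma_{\mathrm{obj}}^2/(2n\mu_{\mathrm{obj}}^2)$, and I would then show that the cubic and remainder terms are both $\mathcal{O}(1/n^2)$.

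For the cubic term, the iid property gives $\mathbb{E}(\bar p-\mu_{\mathrm{obj}})^3 = \mu_3/n^2$, where $\mu_3$ is the third central moment of a single $p_{\mathrm{obj},i}^*$ (which is finite and bounded in absolute value by $1$ thanks to $p_{\mathrm{obj},i}^*\in[0,1]$). For the remainder, the same iid calculation yields $\mathbb{E}(\bar p-\mu_{\mathrm{obj}})^4=(3(n-1)\sigma_{\mathrm{obj}}^4+\mu_4)/n^3=\mathcal{O}(1/n^2)$, so $\mathbb{E}|R_4(\bar p)|=\mathcal{O}(1/n^2)$ with the constant depending on $\epsilon_{\mathrm{obj}}$. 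Multiplying by $-1$ gives the claimed decomposition of $\mathrm{NLL}_n^{\mathrm{obj}}=-\mathbb{E}\log\bar p$.

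The only real subtlety is the remainder bound: one needs a strict positive lower bound on $\bar p$ (not just $\mu_{\mathrm{obj}}>0$) to keep the fourth derivative of $\log$ bounded on the range of integration, which is precisely why the hypothesis $p_{\mathrm{obj},i}^*\ge\epsilon_{\mathrm{obj}}>0$ is needed. Stopping the Taylor expansion at the quadratic term would give only an $\mathcal{O}(n^{-3/2})$ remainder via Jensen's inequality; going one order further and explicitly accounting for the third central moment of $\bar p$ is essential for obtaining $\mathcal{O}(1/n^2)$.
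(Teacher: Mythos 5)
Your proof is correct, and the skeleton is the same as the paper's: Taylor-expand $\log$ around $\mu_{\mathrm{obj}}$, kill the linear term, read off $\sigma_{\mathrm{obj}}^2/(2n\mu_{\mathrm{obj}}^2)$ from the quadratic term, and use the iid central-moment identities $\mathbb{E}(\bar p-\mu_{\mathrm{obj}})^3=\mu_3/n^2$ and $\mathbb{E}(\bar p-\mu_{\mathrm{obj}})^4=\bigl(3(n-1)\sigma_{\mathrm{obj}}^4+\mu_4\bigr)/n^3=\mathcal{O}(1/n^2)$ for the higher-order terms. Where you genuinely diverge is in controlling the remainder. The paper bounds its exact remainder $R_m$ by re-expanding it as the tail of the Taylor series of $\log$, which converges only on $|\bar p-\mu_{\mathrm{obj}}|<\mu_{\mathrm{obj}}$; this forces them into a truncation argument, splitting on the event $|\bar p-\mu_{\mathrm{obj}}|>\varepsilon_n$, invoking Hoeffding's inequality to make that event exponentially unlikely, and choosing $\varepsilon_n=n^{-(1-\delta)/2}$ with $\delta=1/5$ to balance the two bounds. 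You instead use the Lagrange form of the remainder, $|R_4(\bar p)|\le(\bar p-\mu_{\mathrm{obj}})^4/(4\xi^4)$ with $\xi$ between $\bar p$ and $\mu_{\mathrm{obj}}$, and the hypothesis $p_{\mathrm{obj},i}^*\ge\epsilon_{\mathrm{obj}}$ gives the uniform pointwise bound $|R_4(\bar p)|\le(\bar p-\mu_{\mathrm{obj}})^4/(4\epsilon_{\mathrm{obj}}^4)$ on the whole range, so $\mathbb{E}|R_4|=\mathcal{O}(1/n^2)$ follows in one line from the fourth-moment formula. This is cleaner and entirely elementary; it buys you a shorter proof with no concentration inequality and no tuning of $\varepsilon_n$, at the cost of a constant $\epsilon_{\mathrm{obj}}^{-4}$ rather than the paper's $\mu_{\mathrm{obj}}$-dependent constants (both approaches need $\epsilon_{\mathrm{obj}}>0$ in any case). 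Your closing remark is also on point: the signed third central moment of $\bar p$ decays like $n^{-2}$ by cancellation, whereas a crude absolute-value bound would only give $n^{-3/2}$, so carrying the expansion to third order is indeed what makes the $\mathcal{O}(1/n^2)$ claim work.
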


The proof is based on the Taylor expansions for the moments of functions of random variables, we provide it in Appendix~\ref{app:theory_prop_proof}. The assumption about the lower limit of model predictions $\epsilon_\mathrm{obj} > 0$ is necessary for the accurate derivation of the asymptotic in~\eqref{eq:nll_ens_single_pl}. We argue, however, that this condition is fulfilled in practice as real softmax outputs of neural networks are always positive and separated from zero.

The model-average NLL of an ensemble of size $n$ on the whole dataset, $\mathrm{NLL}_n$, can be obtained via summing $\mathrm{NLL}_{n}^{\mathrm{obj}}$ over objects, which implies that $\mathrm{NLL}_n$ also behaves as $c + bn^{-1}$, where $c, b>0$ are constants w.\,r.\,t.\,$n$, as $n \rightarrow \infty$. However, for the finite-range $n$, the dependency in $\mathrm{NLL}_n$ may be more complex.

\citet{pitfalls} emphasize that the comparison of the NLLs of different models with suboptimal softmax temperature may lead to an arbitrary ranking of the models, so the comparison should only be performed after \emph{calibration}, i.\,e. with optimally selected temperature $\tau$. The model-average CNLL of an ensemble of size $n$, measured on the whole dataset $\mathcal{D}$, is defined as follows: 
\begin{equation}
\mathrm{CNLL}_n = \mathbb{E} \min_{\tau>0} \biggl\{
 -\sum_{\mathrm{obj} \in \mathcal{D}} \log \bar{p}^*_{\mathrm{obj}, n}(\tau) \biggr\},
 \label{eq:true_cnll}
\end{equation}
where the expectation is also taken over models, and $\bar{p}_{\mathrm{obj}, n}(\tau) \in [0, 1]^K $ is the distribution over $K$ classes output by  the ensemble of $n$ networks with softmax temperature $\tau$. \citet{pitfalls} obtain this distribution by averaging predictions $p_{\mathrm{obj}, i} \in [0, 1]^K$ of the member networks $i=1,\dots, n$ for a given object and applying the temperature $\tau>0$ on top of the ensemble: $\bar{p}_{\mathrm{obj}, n}(\tau) = \mathrm{softmax} \bigl\{ \bigl( \log (\frac 1 n \sum_{i=1}^n
 p_{\mathrm{obj}, i}
)\bigr) \bigl/ \tau \bigr\} $. 
This is a native way of calibrating, in a sense that we plug the ensemble into a standard procedure of calibrating an arbitrary model. We refer to the described calibration procedure as applying temperature \emph{after} averaging. In our work, we also consider another way of calibrating, namely applying temperature \emph{before} averaging:
$\bar{p}_{\mathrm{obj}, n}(\tau) =\frac 1 n \sum_{i=1}^n \mathrm{softmax} \{ \log  (p_{\mathrm{obj}, i}) / \tau \}$. The two calibration procedures perform similarly in practice, in most of the cases the second one performs slightly better (see Appendix~\ref{app:cnll_1}).

The following series of derivations helps to connect
the non-calibrated and calibrated NLLs. If we fix some $\tau>0$ and apply it \emph{before} averaging, $\bar p_{\mathrm{obj}, n} (\tau)$ fits the form of the ensemble in the right-hand side of equation~\eqref{eq:nll_ens_single}, and according to Proposition~\ref{prop:pl_ens}, we obtain that the model-average NLL of an $n$-size ensemble with fixed temperature $\tau$, $\mathrm{NLL}_n(\tau)$, follows a power law w.\,r.\,t.\,$n$ as $n \rightarrow \infty$.  Applying $\tau$ \emph{after} averaging complicates the derivation, but the same result is generally still valid, see Appendix~\ref{app:theory_nll_temp_after}. However, the parameter $b$ of the power law may become negative for certain values of $\tau$. In contrast, when we apply $\tau$ before averaging, $b$ always remains positive, see eq.~\eqref{eq:nll_ens_single_pl}.

Minimizing $\mathrm{NLL}_n(\tau)$ w.\,r.\,t.\,$\tau$ results in a lower envelope of the (asymptotic) power laws:
\begin{equation}
    \label{eq:le_nll}
    \mathrm{LE\mbox{-}NLL}_n = \min_{\tau>0} \mathrm{NLL}_n(\tau), 
    \quad
    \mathrm{NLL}_n(\tau) \overset{n \rightarrow \infty}{\sim} \mathrm{PL}_n.
\end{equation}
The lower envelope of power laws also follows an (asymptotic) power law. Consider for simplicity a finite set of temperatures $\{\tau_1, \dots, \tau_T\}$, which is the conventional practical case. As each of $\mathrm{NLL}_n(\tau_t), t = 1,\dots,T$ converges to its asymptote $c(\tau_t)$, there exists an optimal temperature $\tau_{t^*}$ corresponding to the lowest $c(\tau_{t^*})$. The above implies that starting from some point $n$, $\mathrm{LE\mbox{-}NLL}_n$ will coincide with $\mathrm{NLL}_n(\tau_{t^*})$ and hence follow its power law. We refer to Appendix~\ref{app:theory_lower_env_cont_temp} for further discussion on continuous temperature.

Substituting the definition of $\mathrm{NLL}_n(\tau)$ into~\eqref{eq:le_nll} results in: 
\begin{equation}
\mathrm{LE\mbox{-}NLL}_n = \min_{\tau>0} \mathbb{E} \biggl\{ - 
\sum_{\mathrm{obj} \in \mathcal{D}}  \log \bar{p}^*_{\mathrm{obj}, n}(\tau) \biggr \},
\end{equation}
from which we obtain that the only difference between $\mathrm{LE\mbox{-}NLL}_n$ and $\mathrm{CNLL}_n$ is the order of the minimum operation and the expectation. Although this results in another calibration procedure than the commonly used one, we show in Appendix~\ref{app:two_types_nll} that the difference between the values of $\mathrm{LE\mbox{-}NLL}_n$ and $\mathrm{CNLL}_n$ is negligible in practice. Conceptually, applying expectation inside the minimum is also a reasonable setting: in this case, when choosing the optimal $\tau$, we use the more reliable estimate of the NLL of the $n$-size ensemble with temperature $\tau$. This setting is not generally considered in practice, since it requires training several ensembles and, as a result, is more computationally expensive. In the experiments we follow the definition of CNLL~\eqref{eq:true_cnll} 
to consider the most practical scenario.

To sum up, in this section we derived an asymptotic power law for LE-NLL that may be treated as another definition of CNLL, and that closely approximates the commonly used CNLL in practice.

\section{Experimental setup}
\label{exp_setup}
We conduct our experiments with convolutional neural networks, WideResNet~\cite{wrn} and VGG16~\cite{vgg}, on CIFAR-10~\cite{CIFAR10} and CIFAR-100~\cite{CIFAR100} datasets. 
We consider a wide range of network sizes $s$ by varying the width factor $w$: for VGG\,/\,WideResNet, we use convolutional layers with $[w, 2w, 4w, 8w]$\,/\,$[w, 2w, 4w]$ filters, and fully-connected layers with $8w$\,/\,$4w$ neurons. For VGG\,/\,WideResNet, we consider $2 \leqslant w \leqslant 181$ / $5 \leqslant w \leqslant 453$; $w=64$\,/\,$160$ corresponds to a standard, commonly used, configuration with $s_{\mathrm{standard}}$ = 15.3M / 36.8M parameters. These sizes are later referred to as the standard budgets. 
For each network size, we tune hyperparameters (weight decay and dropout) using grid search. We train all networks for 200 epochs with SGD with an annealing learning schedule and a batch size of 128. We aim to follow the practical scenario in the experiments, so we use the definition CNLL~\eqref{eq:true_cnll}, not LE-NLL~\eqref{eq:le_nll}. Following~\cite{pitfalls}, we use the ``test-time cross-validation'' to compute the CNLL. We apply the temperature before averaging, the motivation for this is given in section~\ref{sec:ensemble_size}. More details are given in Appendix~\ref{app:experimental_setup}. 

For each network size $s$, we train at least $\ell = \max\{N, 8 s_{\mathrm{standard}}/s\}$ networks, $N=64$\,/\,$12$ for VGG\,/\,WideReNet. For each $(n, s)$ pair, given the pool of $\ell$ trained networks of size $s$, we construct $\lfloor \frac \ell n \rfloor$ ensembles of $n$ distinct networks. The NLLs of these ensembles have some variance, so in all experiments, we average NLL over $\lfloor \frac \ell n \rfloor$ runs. We use these values to approximate NLL with a power law along the different directions of the $(n, s)$-plane. For this, we only consider points that were averaged over at least three runs.

{\bf Approximating sequences with power laws.$\quad$} 
Given an arbitrary sequence $\{\hat y_m\},\,m=1,\dots, M$, we approximate it with a power law $\mathrm{PL}_m = c + b m^{a}$. In the rest of the work, we use the hat notation $\hat y_m$ to denote the observed data, while the value without hat, $y_m$, denotes $y$ as a function of $m$. To fit the parameters $a,\,b,\,c$, we solve the following optimization task using BFGS:
\begin{equation}
    \label{fit_loss}
    (a, b, c) = \underset{a, b, c}{\mathrm{argmin}}
    \frac 1 M \sum_{m=1}^M  
    \bigl (\log_2 (\hat y_m - c) - \log_2 (b m^{a}) \bigr)^2.
\end{equation}
We use the logarithmic scale to pay more attention to the small differences between values $\hat y_m$ for large $m$. For a fixed $c$, optimizing the given loss is equivalent to fitting the linear regression model with one factor $\log_2m$ in the space $\log_2 m$ --- $\log_2(y_m-c)$ (see fig.~\ref{fig:motivation}, right as an example).

\section{NLL as a function of ensemble size}
\label{sec:ensemble_size} 
In this section, we would like to answer the question, whether the NLL as the function of ensemble size can be described by a power law in practice. We consider both calibrated NLL and NLL with a fixed temperature. To answer the stated question, we fit the parameters $a,\,b,\,c$ of the power law on the points $\widehat {\mathrm{NLL}}_n(\tau)$ or $\widehat {\mathrm{CNLL}}_n$, $n=1, 2, 3, \dots$, using the method described in section~\ref{exp_setup}, and analyze the resulting parameters and the quality of the approximation.

As we show in Appendix~\ref{app:cnll_2}, when the temperature is applied \textit{after} averaging, $\mathrm{NLL}_n(\tau)$ is, in some cases, an increasing function of $n$. As for CNLL, we found settings when $\mathrm{CNLL}_n$ with the temperature applied \textit{after} averaging is not a convex function for small $n$, and as a result, cannot be described by a power law. In the rest of the work, we apply temperature \textit{before} averaging, as in this case, both $\mathrm{NLL}_n(\tau)$ and $\mathrm{CNLL}_n$ can be closely approximated with power laws in all considered cases.  

{\bf NLL with fixed temperature.$\quad$}
For all considered dataset--architecture pairs, and for all temperatures, $\widehat {\mathrm{NLL}}_n(\tau)$ with fixed $\tau$ can be closely approximated with a power law. Figure~\ref{fig:motivation}, middle and right shows an example approximation for VGG of the commonly used size with the temperature equal to one.
Figure~\ref{fig:ens_fixed_t} shows the dynamics of the parameters $a,\,b,\,c$ of power laws, approximating the NLL with a fixed temperature of ensembles of different network sizes and for different temperatures, for VGG on a CIFAR-100 dataset. The rightmost plot reports the quality of approximation measured with RMSE in the \emph{log}-space. We note that even the highest RMSE in the \emph{log}-space corresponds to the low RMSE in the \emph{linear} space (the RMSE in the \emph{linear} space is less than 0.006 for all lines in figure~\ref{fig:ens_fixed_t}).  

In theory, starting from large enough $n$, $\mathrm{NLL}_n(\tau)$ follows a power law with parameter $a$ equal to -1, and for small $n$, more than one terms in eq.~\eqref{eq:nll_ens_single_pl} are significant, resulting in a complex dependency $\mathrm{NLL}_n(\tau)$. In practice, we observe the power-law behaviour for the whole considered graph $\mathrm{NLL}_n(\tau)$, $n \geqslant 1$, but with $a$ slightly larger than $-1$. This result is consistent for all considered dataset--architecture pairs, see Appendix~\ref{app:nll_fixed_temp_setup2}.

When the temperature grows, the general behaviour is that $a$ approaches -1 more and more tightly. This behaviour breaks for the ensembles of small networks (blue lines). The reason is that the number of trained small networks is large, and the NLL for large ensembles with high temperature is noisy in log-scale, so the approximation of NLL with power law is slightly worse than for other settings, as confirmed in the rightmost plot of fig.~\ref{fig:ens_fixed_t}. Nevertheless, these approximations are still very close to the data, we present the corresponding plots in Appendix~\ref{A:large_sized_t}.

Parameter $b=\mathrm{NLL}_1(\tau)-\mathrm{NLL}_\infty(\tau)$ reflects the potential gain from the ensembling of the networks with the given network size and the given temperature. For a particular network size, the gain is higher for low temperatures, since networks with low temperatures are overconfident in their predictions, and ensembling reduces this overconfidence. With high temperatures, the predictions of both single network and ensemble get closer to the uniform distribution over classes, and $b$ approaches zero. 


Parameter $c$ approximates the quality of the ``infinite''-size ensemble, $\mathrm{NLL}_\infty(\tau)$. For each network size, there is an optimal temperature, which may be either higher or lower than one, depending on the dataset--architecture combination (see Appendix~\ref{app:nll_fixed_temp_setup2} for more examples). This shows that even large ensembles need calibration. Moreover, the optimal temperature increases, when the network size grows. Therefore, not only large single networks are more confident in their predictions than small single networks, but the same holds even for large ensembles. Higher optimal temperature reduces their confidence. We notice that for given network size, the optimal temperature converges when $n \rightarrow \infty$, we show the corresponding plots in Appendix~\ref{app:temp}.

\begin{figure}
  \centering
  \includegraphics[width=\textwidth]{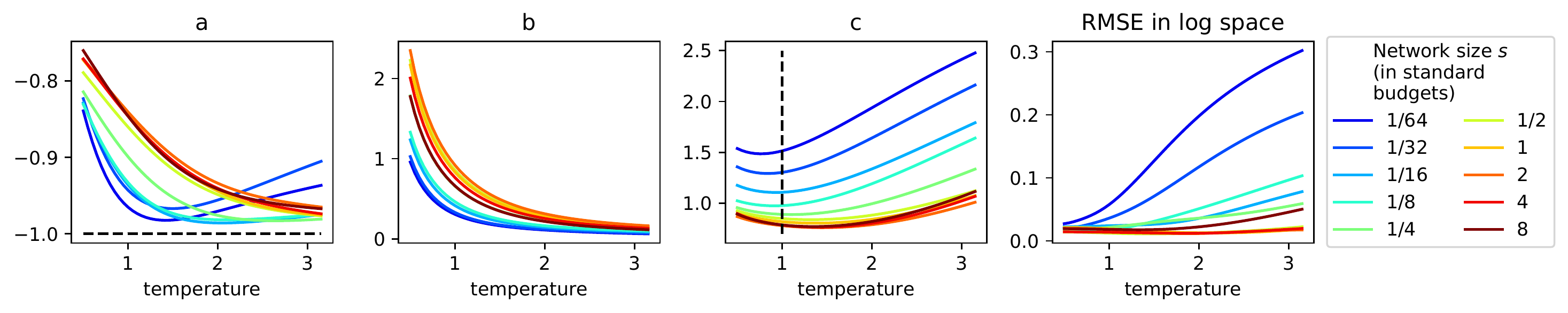}
  \caption{Parameters of power laws and the quality of the approximation for $\mathrm{NLL}_n(\tau)$ with a fixed temperature $\tau$ for VGG on CIFAR-100.}
  \label{fig:ens_fixed_t}
\end{figure}

{\bf NLL with calibration.$\quad$}
When the temperature is applied before averaging, $\widehat{\mathrm{CNLL}}_n$ can be closely approximated with a power law for all considered dataset--architecture pairs, see Appendix~\ref{cnll:setup2}. Figure~\ref{fig:cnll_ens} shows how the resulting parameters of the power law change when the network size increases for different settings. The rightmost plot reports the quality of the approximation.

In figure~\ref{fig:cnll_ens}, we observe that for WideResNet, parameter $b$ decreases, as $s$ becomes large, and $c$ starts growing for large $s$. For VGG, this effect also occurs in a light form but is almost invisible at the plot. This suggests that large networks gain less from the ensembling, and therefore the ensembles of larger networks are less effective than the ensembles of smaller networks. We suppose, the described effect is a consequence of under-regularization (the large networks need more careful hyperparameter tuning and regularization), because we also observed the described effect in a more strong form for the networks with all regularization turned off, see Appendix~\ref{app:noreg_b_c}. However, the described effect might also be a consequence of the decreased diversity of wider networks~\cite{neal2018modern}, and needs further investigation.

\begin{figure}
  \centering
  \includegraphics[width=\textwidth]{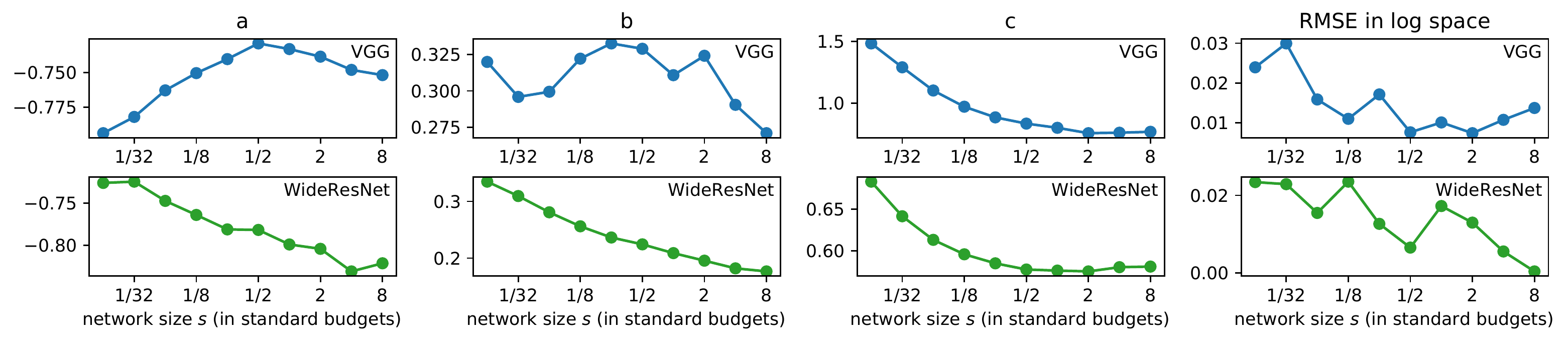}
  \caption{Parameters of power laws and the quality of the approximation for $\mathrm{CNLL}_n$ for different network sizes $s$. VGG and WideResNet on CIFAR-100.}
  \label{fig:cnll_ens}
\end{figure}

\section{NLL as a function of network size}
\label{sec:network_size}
In this section, we analyze the behaviour of the NLL of the ensemble of a fixed size $n$ as a function of the member network size $s$. We consider both non-calibrated and calibrated NLL, and analyze separately cases $n=1$ and $n>1$. \citet{scaling_description} reason about a power law of accuracy for $n=1$ when $s \rightarrow \infty$, considering shallow fully-connected networks on the MNIST dataset. We would like to check, whether $\widehat{\mathrm{(C)NLL}}_s$ can be approximated with a power law on the whole reasonable range of $s$ in practice.

{\bf Single network.$\quad$} Figure~\ref{fig:nll_network_size}, left shows the NLL with $\tau = 1$ and the CNLL of a single VGG on the CIFAR-100 dataset as a function of the network size. We observe  the double descent behaviour~\cite{double_descent, double_descent2} of the non-calibrated NLL, which could not be approximated with a power law for the considered range of $s$. The calibration removes the double-descent behaviour, and allows a close power-law approximation, as confirmed in the middle plot of figure~\ref{fig:nll_network_size}. Interestingly, parameter $a$ is close to  $-0.5$, which coincides with the results of~\citet{scaling_description} derived for the test error. The results for other dataset--architecture pairs are given in Appendix~\ref{app:network_size}.

\citet{double_descent} observe the double descent behaviour of \emph{accuracy} as a function of the network size for highly overfitted networks, when training networks without regularization, with label noise, and for much more epochs than is usually needed in practice. 
In our practical setting, accuracy and CNLL are the monotonic functions of the network size, while for the non-calibrated NLL, the double descent behaviour is observed. \citet{pitfalls} point out that accuracy and CNLL usually correlate, so we hypothesize that the double descent \emph{may be} observed for CNLL in the same scenarios when it is observed for accuracy, while the non-calibrated NLL exhibits the double descent at the earlier epochs in these scenarios. To sum up, our results support the conclusions of~\cite{pitfalls} that the comparison of the NLL of the models of different \emph{sizes} should only be performed with an optimal temperature.   

{\bf Ensemble.$\quad$}
As can be seen from figure~\ref{fig:nll_network_size}, right, for ensemble sizes $n > 1$, CNLL starts increasing at some network size $s$. This agrees with the behaviour of parameter $c$ of the power law for CNLL shown in figure~\ref{fig:cnll_ens} and was discussed in section~\ref{sec:ensemble_size}. Because of this behaviour, we do not perform experiments on approximating $\mathrm{CNLL}_s$ for $n>1$ with a power law.

\begin{figure}
  \centering
  \includegraphics[width=\textwidth]{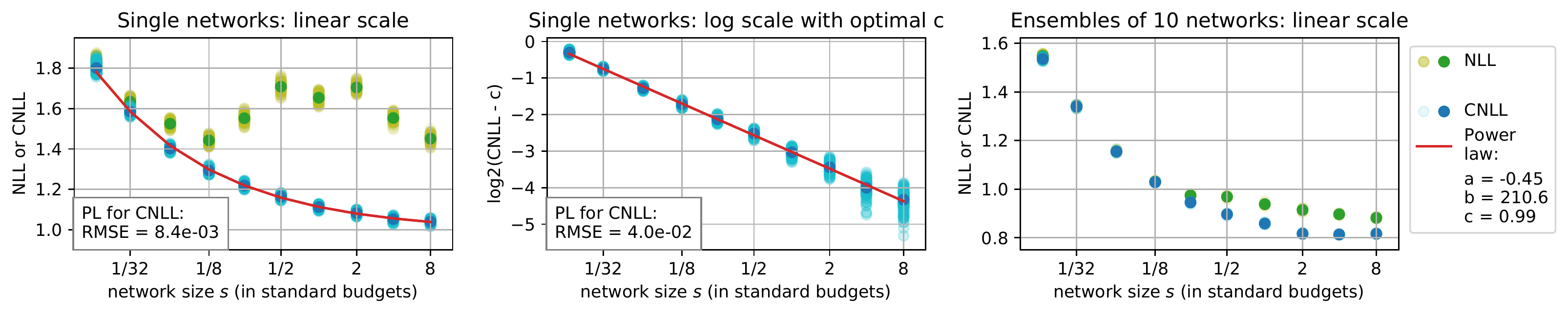}
  \caption{Non-calibrated $\mathrm{NLL}_s$ and $\mathrm{CNLL}_s$ for VGG on CIFAR-100. Left and middle: for a single network, $\mathrm{NLL}_s$ exhibits double descent, while $\mathrm{CNLL}_s$ can be closely approximated with a power law. Right: $\mathrm{NLL}_s$ and $\mathrm{CNLL}_s$ of an ensemble of several networks may be non-monotonic functions.}
  \label{fig:nll_network_size}
\end{figure}

\section{NLL as a function of the total parameter count}
\label{sec:budgets}
In the previous sections, we analyzed the vertical and horizontal cuts of the $(n, s)$-plane shown in figure~\ref{fig:motivation}, left. In this section, we analyze the diagonal cuts of this space. One direction of diagonal corresponds to the fixed total parameter count, later referred to as a memory budget, and the orthogonal direction reflects the increasing budget.

We firstly investigate CNLL as a function of the memory budget. In figure~\ref{fig:budgets}, left, we plot sequences $\widehat{\mathrm{CNNL}}_n$ for different network sizes $s$, aligning plots by the total parameter count. CNLL as a function of the memory budget is then introduced as the lower envelope of the described plots. As in the previous sections, we approximate this function with the power law, and observe, that the approximation is tight, the corresponding visualization is given in figure~\ref{fig:budgets}, middle. The same result for other dataset-architecture pairs is shown in Appendix~\ref{app:budgets}. 
 
Another, practically important, effect is that the lower envelope may be reached at $n>1$. In other words, for a fixed memory budget, a single network  may perform worse than an ensemble of several medium-size networks of the same total parameter count, called a memory split in the subsequent discussion. We refer to the described effect itself as a Memory Split Advantage effect (MSA effect). We further illustrate the MSA effect in figure~\ref{fig:budgets}, right, where each line corresponds to a particular memory budget, the x-axis denotes the number of networks in the memory split, and the lowest CNLL, denoted at the y-axis, is achieved at $n > 1$ for all lines. We consistently observe the MSA effect for different settings and metrics, i.\,e. CNLL and accuracy, for a wide range of budgets, see Appendix~\ref{app:budgets}. We note that the MSA-effect holds even for budgets smaller than the standard budget. We also show in appendix~\ref{app:budgets} that using the memory split with the relatively small number of networks is only moderately slower than using a single wide network, in both training and testing stages. We describe the memory split advantage effect in more details in~\cite{chirkova2020deep}.

\begin{figure}
  \begin{center}
\centerline{
 \begin{tabular}{c@{}c}
  \includegraphics[height=26mm]{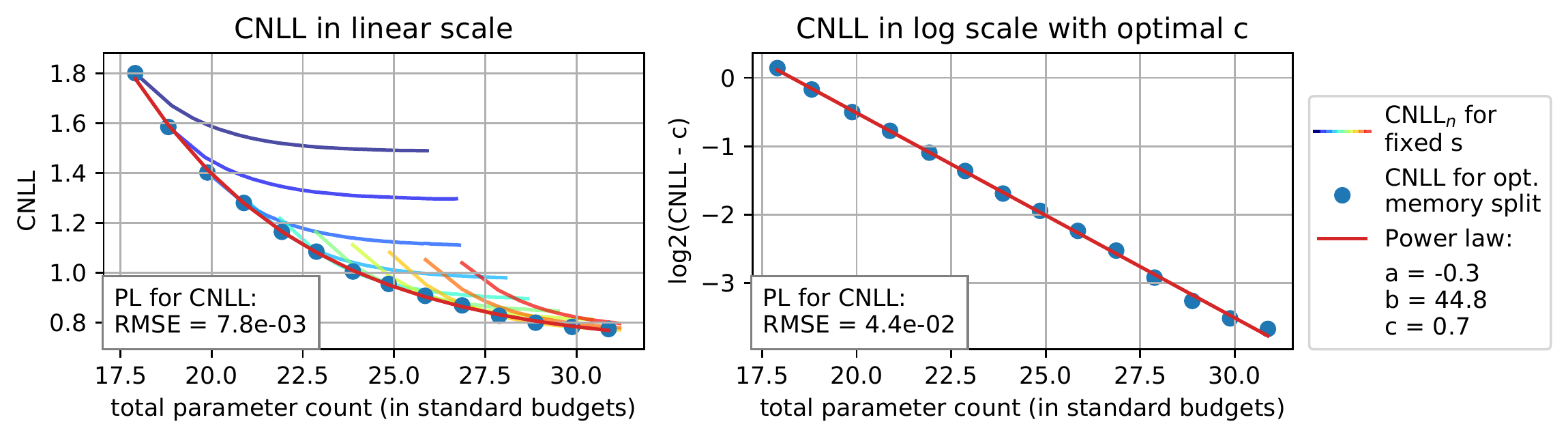}&
\includegraphics[height=26mm]{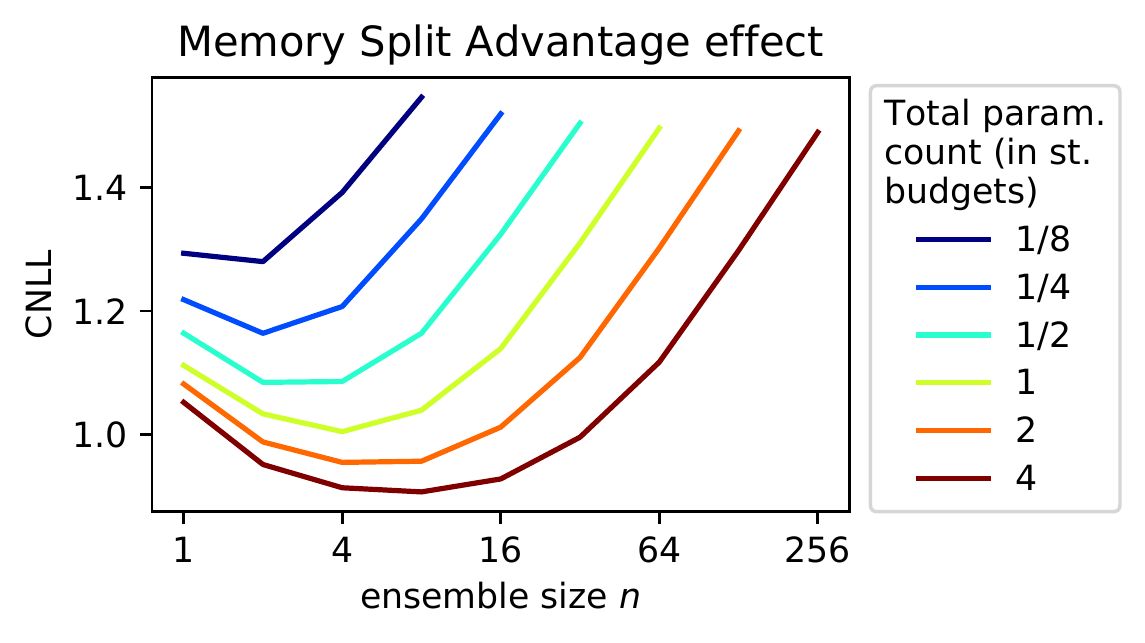}
  \end{tabular}}
  \caption{Left and middle: $\mathrm{CNLL}_B$ for VGG on CIFAR-100 can be closely approximated with a power law. $\mathrm{CNLL}_B$ is a lower envelope of $\mathrm{CNLL}_n$ for different network sizes $s$. Right: Memory Split Advantage effect, VGG on CIFAR-100. 
   For different memory budgets $B$, the optimal CNLL is achieved at $n>1$.}
  \label{fig:budgets}
  \end{center}
\end{figure}

\section{Prediction based on power laws}
\label{sec:prediction}
One of the advantages of a power law is that, given a few starting points $y_1, \dots, y_m$ satisfying the power law, one can exactly predict values $y_i$ for any $i \gg m$. In this section, we check whether the power laws discovered in section~\ref{sec:ensemble_size} are stable enough to allow accurate predictions. 

We use the CNLL of the ensembles of sizes $1-4$ as starting points, and predict the CNLL of larger ensembles. We firstly conduct the experiment using the values of starting points, obtained by averaging over a large number of runs. In this case, the CNLL of large ensembles may be predicted with high precision, see Appendix~\ref{app:prediction}. Secondly, we conduct the experiment in the practical setting, when the values of starting points were obtained using only 6 trained networks (using 6 networks allows the more stable estimation of CNLL of ensembles of sizes $1-3$). The two left plots of figure~\ref{fig:predictions_ensembles} report the error of the prediction for the different ensemble sizes and network sizes of VGG and WideResNet on the CIFAR-100 dataset. The plots for other settings are given in Appendix~\ref{app:prediction}. The experiment was repeated 10 times for VGG and 5 times for WideResNet with the independent sets of networks, and we report the average error. The error is $1-2$ orders smaller than the value of CNLL, and based on this, we conclude that the discovered power laws allow quite accurate predictions.

In section~\ref{sec:budgets}, we introduced memory splitting, a simple yet effective method of improving the quality of the network, given the memory budget $B$. Using the obtained predictions for CNLL, we can now predict the optimal memory split (OMS) for a fixed $B$ by selecting the optimum at a specific diagonal of the predicted ($n$, $s$)-plane, see  
Appendix~\ref{app:prediction} for more details.
We show the results for the practical setting with 6 given networks in figure~\ref{fig:predictions_ensembles}, right. The plots depict the number of networks $n$ in the true and predicted OMS; the network size can be uniquely determined by $B$ and $n$. In most cases, the discovered power laws predict either the exact or the neighboring split. If we predict the neighboring split, the difference in CNLL between the true and predicted splits is negligible, i.\,e.\,of the same order as the errors presented in figure~\ref{fig:predictions_ensembles}, left. 

To sum up, we observe that the discovered power laws not only \textit{interpolate} $\widehat{\mathrm{CNNL}}_n$ on the \textit{whole} considered range of $n$, but also is able to \textit{extrapolate} this sequence, i.\,e.\,CNLL fitted on a \emph{short} segment of $n$ approximates well the \emph{full} range, providing an argument for using particularly power laws and not other functions. 

\begin{figure}
  \begin{center}
\centerline{
 \begin{tabular}{@{}c@{}c}
 \multicolumn{2}{c}{\footnotesize RMSE between true and predicted CNLL} \\
  \includegraphics[width=0.27\textwidth]{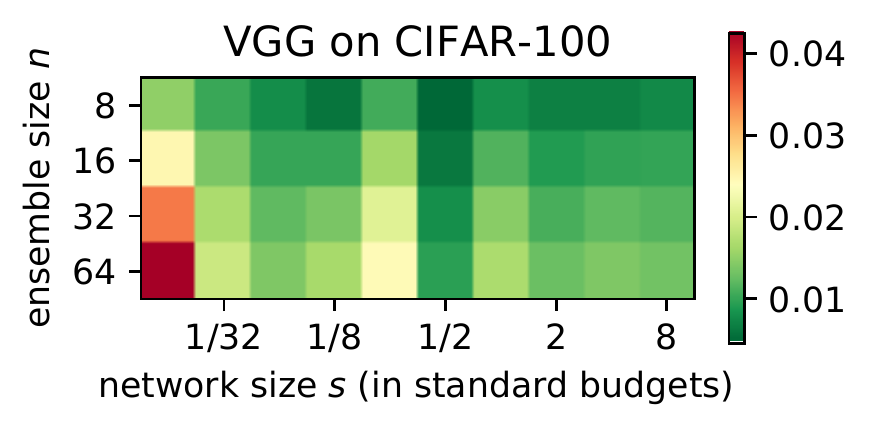}&
\includegraphics[width=0.27\textwidth]{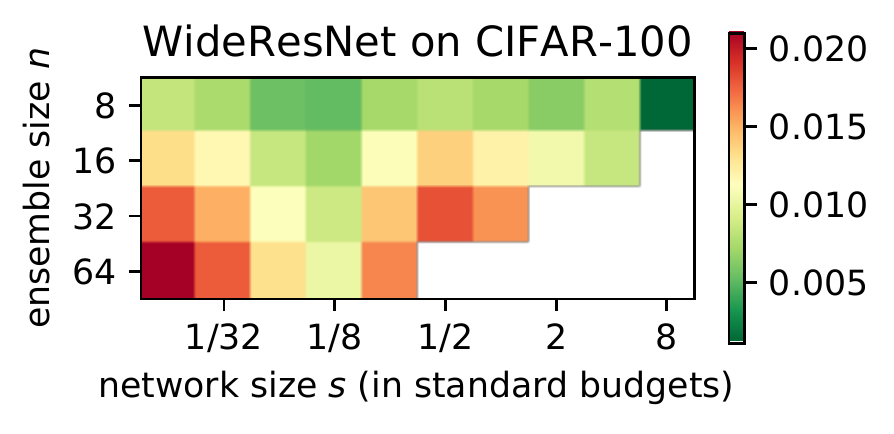} 
  \end{tabular}
  \begin{tabular}{@{}c@{}c}
  \multicolumn{2}{c}{\footnotesize Optimal memory splits: predicted vs true}\\
\includegraphics[width=0.22\textwidth]{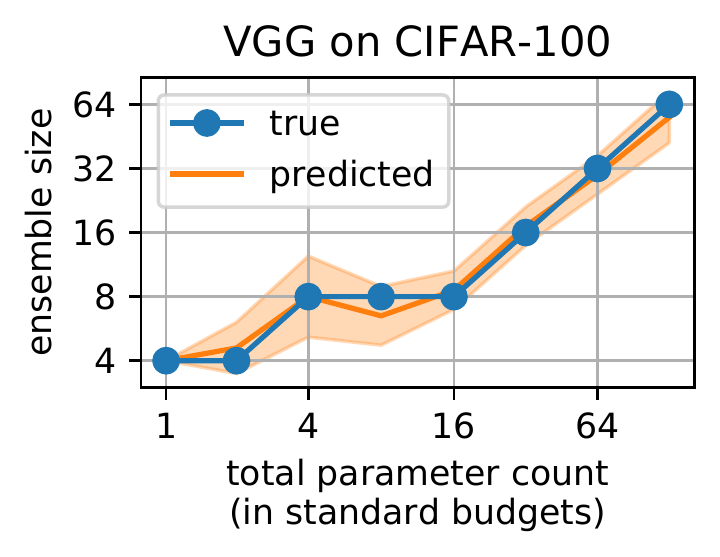}&
\includegraphics[width=0.22\textwidth]{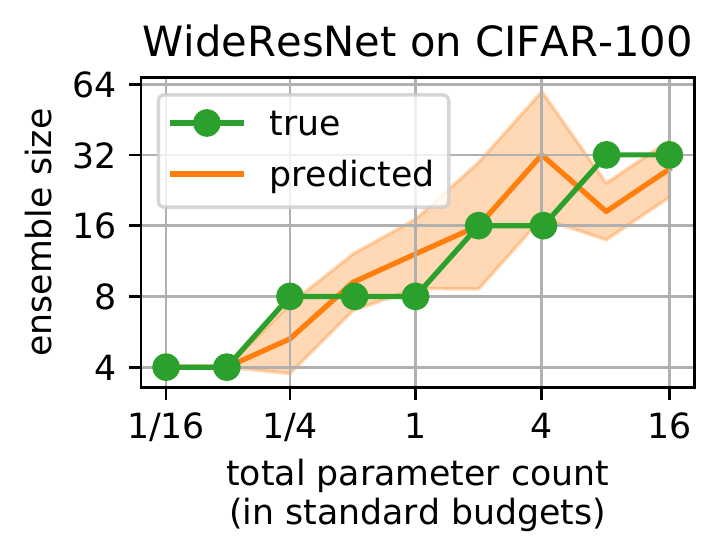}
  \end{tabular}}
  \caption{Predictions based on 
  $\mathrm{CNLL}_n$ power laws for VGG and WideResNet on CIFAR-100.
  Predictions are made for large $n$ based on $n = 1..4$.
  Left pair: RMSE between true and predicted CNLL. Right pair: predicted optimal memory splits vs true ones. Mean $\pm$ standard deviation is shown for predictions.}
  \label{fig:predictions_ensembles}
  \end{center}
\end{figure}

\section{Related Work}

{\bf Deep ensembles and overparameterization.$\quad$} 
The two main approaches to improve deep neural networks accuracy are ensembling and increasing network size. 
While a bunch of works report the quantitative influence of the above-mentioned techniques on model quality~\cite{fort2019deep, ju2018relative, deepens, double_descent, neyshabur2018towards, novak2018sensitivity}, few investigate the qualitative side of the effect.
Some recent works~\cite{d2020double, scaling_description, geiger2019jamming} consider a simplified or narrowed setup to tackle it. 
For instance,~\citet{scaling_description} similarly discover the power laws in test error w.\,r.\,t.\,model and ensemble size for simple binary classification with hinge loss, and give a heuristic argument supporting their findings. 
We provide an extensive theoretical and empirical justification of similar claims for the calibrated NLL using modern architectures and datasets.
Other layers of works on studying neural network ensembles and overparameterized models include but not limited to the Bayesian perspective~\cite{he2020bayesian, wilson2020case, wilson2020bayesian}, ensembles diversity improvement techniques~\cite{kim2018attention, lee2016stochastic, sinha2020dibs, zaidi2020neural}, neural tangent kernel (NTK) view on overparameterized neural networks~\cite{arora2019exact, jacot2018neural, lee2019wide}, etc.

{\bf Power laws for predictions.$\quad$}
A few recent works also empirically discover power laws with respect to data and model size and use them to extrapolate the performance on small models/datasets to larger scales~\cite{kaplan2020scaling, rosenfeld2019constructive}. 
Their findings even allow estimating the optimal compute budget allocation given limited resources.
However, these studies do not account for the ensembling of models and the calibration of NLL.

{\bf MSA-effect.$\quad$} 
Concurrently with our work, \citet{google_msa} investigate a similar effect for budgets measured in FLOPs. 
Earlier, an MSA-like
effect has also been noted in~\cite{coupled_ensembles,li2019ensemblenet}. 
However, the mentioned works did not consider the proper regularization of networks of different sizes and did not propose the method for predicting the OMS, while both aspects are important in practice. 

\section{Conclusion}
In this work, we investigated the power-law behaviour of CNLL of deep ensembles as a function of ensemble size $n$ and network size $s$ and observed the following power laws. Firstly, with a minor modification of the calibration procedure, CNLL as a function of $n$ follows a power law on the wide finite range of $n$, starting from $n=1$, but with the power parameter slightly higher than the one derived theoretically. Secondly, the CNLL of a single network follows a power law as a function of the network size $s$ on the whole reasonable range of network sizes, with the power parameter approximately the same as derived. Thirdly, the CNLL also follows a power law as a function of the total parameter count (memory budget). The discovered power laws allow predicting the quality of large ensembles based on the quality of the smaller ensembles consisting of networks with the same architecture. The practically important finding is that for a given memory budget, the number of networks in the optimal memory split is usually much higher than one, and can be predicted using the discovered power laws. Our source code is available at \url{https://github.com/nadiinchi/power_laws_deep_ensembles}.

\section*{Broader Impact}
In this work, we provide an empirical and theoretical study of existing models (namely, deep ensembles); we propose neither new technologies nor architectures, thus we are not aware of its specific ethical or future societal impact. We, however, would like to point out a few benefits gained from our findings, such as optimization of resource consumption when training neural networks and contribution to the overall understanding of neural models. As far as we are concerned, no negative consequences may follow from our research.

\begin{ack}
We would like to thank Dmitry Molchanov, Arsenii Ashukha, and Kirill Struminsky for the valuable feedback. The theoretical results presented in section~\ref{sec:our_theory} were supported by Samsung Research, Samsung Electronics. The empirical results presented in sections~\ref{sec:ensemble_size},~\ref{sec:network_size},~\ref{sec:budgets},~\ref{sec:prediction}
were supported by the Russian Science Foundation grant \textnumero 19-71-30020. This research was supported in part through the computational resources of HPC facilities at NRU HSE. Additional revenues of the authors for the last three years: Stipend by Lomonosov Moscow State University, Travel support by ICML, NeurIPS, Google, NTNU, DESY, UCM.
\end{ack}

\medskip

\small

\bibliographystyle{apalike}
\bibliography{ref}

\begin{thebibliography}{}

\bibitem[Arora et~al., 2019]{arora2019exact}
Arora, S., Du, S.~S., Hu, W., Li, Z., Salakhutdinov, R.~R., and Wang, R.
  (2019).
\newblock On exact computation with an infinitely wide neural net.
\newblock In {\em Advances in Neural Information Processing Systems}, pages
  8141--8150.

\bibitem[Ashukha et~al., 2020]{pitfalls}
Ashukha, A., Lyzhov, A., Molchanov, D., and Vetrov, D. (2020).
\newblock Pitfalls of in-domain uncertainty estimation and ensembling in deep
  learning.
\newblock In {\em International Conference on Learning Representations}.

\bibitem[Belkin et~al., 2019]{double_descent2}
Belkin, M., Hsu, D., Ma, S., and Mandal, S. (2019).
\newblock Reconciling modern machine-learning practice and the classical
  bias{\textendash}variance trade-off.
\newblock {\em Proceedings of the National Academy of Science},
  116(32):15849--15854.

\bibitem[Chirkova et~al., 2020]{chirkova2020deep}
Chirkova, N., Lobacheva, E., and Vetrov, D. (2020).
\newblock Deep ensembles on a fixed memory budget: One wide network or several
  thinner ones?

\bibitem[d'Ascoli et~al., 2020]{d2020double}
d'Ascoli, S., Refinetti, M., Biroli, G., and Krzakala, F. (2020).
\newblock Double trouble in double descent: Bias and variance (s) in the lazy
  regime.
\newblock {\em arXiv preprint arXiv:2003.01054}.

\bibitem[{Dutt} et~al., 2018]{coupled_ensembles}
{Dutt}, A., {Pellerin}, D., and {Quénot}, G. (2018).
\newblock Coupled ensembles of neural networks.
\newblock In {\em 2018 International Conference on Content-Based Multimedia
  Indexing (CBMI)}, pages 1--6.

\bibitem[Fort et~al., 2019]{fort2019deep}
Fort, S., Hu, H., and Lakshminarayanan, B. (2019).
\newblock Deep ensembles: A loss landscape perspective.
\newblock {\em arXiv preprint arXiv:1912.02757}.

\bibitem[Geiger et~al., 2020]{scaling_description}
Geiger, M., Jacot, A., Spigler, S., Gabriel, F., Sagun, L., d'Ascoli, S.,
  Biroli, G., Hongler, C., and Wyart, M. (2020).
\newblock Scaling description of generalization with number of parameters in
  deep learning.
\newblock {\em Journal of Statistical Mechanics: Theory and Experiment},
  2020(2):023401.

\bibitem[Geiger et~al., 2019]{geiger2019jamming}
Geiger, M., Spigler, S., d'Ascoli, S., Sagun, L., Baity-Jesi, M., Biroli, G.,
  and Wyart, M. (2019).
\newblock Jamming transition as a paradigm to understand the loss landscape of
  deep neural networks.
\newblock {\em Physical Review E}, 100(1):012115.

\bibitem[He et~al., 2020]{he2020bayesian}
He, B., Lakshminarayanan, B., and Teh, Y.~W. (2020).
\newblock Bayesian deep ensembles via the neural tangent kernel.
\newblock {\em arXiv preprint arXiv:2007.05864}.

\bibitem[Jacot et~al., 2018]{jacot2018neural}
Jacot, A., Gabriel, F., and Hongler, C. (2018).
\newblock Neural tangent kernel: Convergence and generalization in neural
  networks.
\newblock In {\em Advances in neural information processing systems}, pages
  8571--8580.

\bibitem[Ju et~al., 2018]{ju2018relative}
Ju, C., Bibaut, A., and van~der Laan, M. (2018).
\newblock The relative performance of ensemble methods with deep convolutional
  neural networks for image classification.
\newblock {\em Journal of Applied Statistics}, 45(15):2800--2818.

\bibitem[Kaplan et~al., 2020]{kaplan2020scaling}
Kaplan, J., McCandlish, S., Henighan, T., Brown, T.~B., Chess, B., Child, R.,
  Gray, S., Radford, A., Wu, J., and Amodei, D. (2020).
\newblock Scaling laws for neural language models.
\newblock {\em arXiv preprint arXiv:2001.08361}.

\bibitem[Kim et~al., 2018]{kim2018attention}
Kim, W., Goyal, B., Chawla, K., Lee, J., and Kwon, K. (2018).
\newblock Attention-based ensemble for deep metric learning.
\newblock In {\em Proceedings of the European Conference on Computer Vision
  (ECCV)}, pages 736--751.

\bibitem[Kondratyuk et~al., 2020]{google_msa}
Kondratyuk, D., Tan, M., Brown, M., and Gong, B. (2020).
\newblock When ensembling smaller models is more efficient than single large
  models.

\bibitem[Krizhevsky et~al., a]{CIFAR10}
Krizhevsky, A., Nair, V., and Hinton, G.
\newblock {CIFAR-10} (canadian institute for advanced research).

\bibitem[Krizhevsky et~al., b]{CIFAR100}
Krizhevsky, A., Nair, V., and Hinton, G.
\newblock {CIFAR-100} (canadian institute for advanced research).

\bibitem[Lakshminarayanan et~al., 2017]{deepens}
Lakshminarayanan, B., Pritzel, A., and Blundell, C. (2017).
\newblock Simple and scalable predictive uncertainty estimation using deep
  ensembles.
\newblock In {\em Advances in Neural Information Processing Systems}.

\bibitem[Lee et~al., 2019]{lee2019wide}
Lee, J., Xiao, L., Schoenholz, S., Bahri, Y., Novak, R., Sohl-Dickstein, J.,
  and Pennington, J. (2019).
\newblock Wide neural networks of any depth evolve as linear models under
  gradient descent.
\newblock In {\em Advances in neural information processing systems}, pages
  8572--8583.

\bibitem[Lee et~al., 2016]{lee2016stochastic}
Lee, S., Prakash, S. P.~S., Cogswell, M., Ranjan, V., Crandall, D., and Batra,
  D. (2016).
\newblock Stochastic multiple choice learning for training diverse deep
  ensembles.
\newblock In {\em Advances in Neural Information Processing Systems}, pages
  2119--2127.

\bibitem[Li et~al., 2019]{li2019ensemblenet}
Li, H., Ng, J. Y.-H., and Natsev, P. (2019).
\newblock Ensemblenet: End-to-end optimization of multi-headed models.

\bibitem[Nakkiran et~al., 2020]{double_descent}
Nakkiran, P., Kaplun, G., Bansal, Y., Yang, T., Barak, B., and Sutskever, I.
  (2020).
\newblock Deep double descent: Where bigger models and more data hurt.
\newblock In {\em 8th International Conference on Learning Representations,
  {ICLR} 2020, Addis Ababa, Ethiopia, April 26-30, 2020}. OpenReview.net.

\bibitem[Neal et~al., 2018]{neal2018modern}
Neal, B., Mittal, S., Baratin, A., Tantia, V., Scicluna, M., Lacoste-Julien,
  S., and Mitliagkas, I. (2018).
\newblock A modern take on the bias-variance tradeoff in neural networks.
\newblock {\em arXiv preprint arXiv:1810.08591}.

\bibitem[Neyshabur et~al., 2018]{neyshabur2018towards}
Neyshabur, B., Li, Z., Bhojanapalli, S., LeCun, Y., and Srebro, N. (2018).
\newblock Towards understanding the role of over-parametrization in
  generalization of neural networks.
\newblock {\em arXiv preprint arXiv:1805.12076}.

\bibitem[Novak et~al., 2018]{novak2018sensitivity}
Novak, R., Bahri, Y., Abolafia, D.~A., Pennington, J., and Sohl-Dickstein, J.
  (2018).
\newblock Sensitivity and generalization in neural networks: an empirical
  study.
\newblock {\em arXiv preprint arXiv:1802.08760}.

\bibitem[Rosenfeld et~al., 2019]{rosenfeld2019constructive}
Rosenfeld, J.~S., Rosenfeld, A., Belinkov, Y., and Shavit, N. (2019).
\newblock A constructive prediction of the generalization error across scales.
\newblock {\em arXiv preprint arXiv:1909.12673}.

\bibitem[Simonyan and Zisserman, 2014]{vgg}
Simonyan, K. and Zisserman, A. (2014).
\newblock Very deep convolutional networks for large-scale image recognition.
\newblock In {\em arXiv preprint arXiv:1409.1556}.

\bibitem[Sinha et~al., 2020]{sinha2020dibs}
Sinha, S., Bharadhwaj, H., Goyal, A., Larochelle, H., Garg, A., and Shkurti, F.
  (2020).
\newblock Dibs: Diversity inducing information bottleneck in model ensembles.
\newblock {\em arXiv preprint arXiv:2003.04514}.

\bibitem[Srivastava et~al., 2014]{overfitting}
Srivastava, N., Hinton, G., Krizhevsky, A., Sutskever, I., and Salakhutdinov,
  R. (2014).
\newblock Dropout: A simple way to prevent neural networks from overfitting.
\newblock {\em Journal of Machine Learning Research}, 15(56):1929--1958.

\bibitem[Szegedy et~al., 2014]{adv_attacks}
Szegedy, C., Zaremba, W., Sutskever, I., Bruna, J., Erhan, D., Goodfellow, I.,
  and Fergus, R. (2014).
\newblock Intriguing properties of neural networks.
\newblock In {\em International Conference on Learning Representations}.

\bibitem[Wilson, 2020]{wilson2020case}
Wilson, A.~G. (2020).
\newblock The case for bayesian deep learning.
\newblock {\em arXiv preprint arXiv:2001.10995}.

\bibitem[Wilson and Izmailov, 2020]{wilson2020bayesian}
Wilson, A.~G. and Izmailov, P. (2020).
\newblock Bayesian deep learning and a probabilistic perspective of
  generalization.
\newblock {\em arXiv preprint arXiv:2002.08791}.

\bibitem[Zagoruyko and Komodakis, 2016]{wrn}
Zagoruyko, S. and Komodakis, N. (2016).
\newblock Wide residual networks.
\newblock In {\em British Machine Vision Conference}.

\bibitem[Zaidi et~al., 2020]{zaidi2020neural}
Zaidi, S., Zela, A., Elsken, T., Holmes, C., Hutter, F., and Teh, Y.~W. (2020).
\newblock Neural ensemble search for performant and calibrated predictions.
\newblock {\em arXiv preprint arXiv:2006.08573}.

\end{thebibliography}

\newpage
\appendix

\section{Theory}
\label{app:theory}

\subsection{Proof of Proposition~\ref{prop:pl_ens}}
\label{app:theory_prop_proof}

In this section, we prove Proposition~\ref{prop:pl_ens} about the asymptotic power law behavior of the model-average NLL of the ensemble for a single object, $\mathrm{NLL}_{n}^{\mathrm{obj}}$, as a function of the ensemble size $n$.

\begin{proof}
In what follows, by abuse of notation, the subscript $\mathrm{obj}$ is omitted. Consider $n$ independent identically distributed random variables $p_i^* \in [\epsilon, 1]$, $\epsilon > 0$ with mean $\mu = \mathbb{E} p_i^*$ and variance $\sigma^2 = \mathbb{D} p_i^*$. Denote $\bar{p}^*_n = \frac{1}{n} \sum_{i=1}^n p_i^*$. The cornerstone of the proof is doing Taylor expansion of logarithm around $\mu$ and obtaining the following expression:
\begin{multline}
    \label{eq:taylor}
    \mathrm{NLL}_{n}^{\mathrm{obj}} = -\mathbb{E} \log(\bar{p}^*_n) = \mathbb{E}\left[-\log(\mu) + \frac{\mu - \bar{p}^*_n}{\mu}  + \frac{(\mu - \bar{p}^*_n)^2}{2\mu^2} + R_2(\mu - \bar{p}^*_n) \right] = \\
    = -\log(\mu) + \frac 1 n \frac{\sigma^2}{2\mu^2} + \mathbb{E}\left[ R_2(\mu - \bar{p}^*_n) \right],
\end{multline}
where $R_m(\mu - \bar{p}^*_n) = -\log\left(\frac{\bar{p}^*_n}{\mu}\right) - \sum_{k=1}^m \frac{(\mu - \bar{p}^*_n)^k}{k\mu^k}$ is the remainder term. First we show how to bound $\mathbb{E}\left[R_m(\mu - \bar{p}^*_n)\right]$ w.\,r.\,t.\,$n$ for general $m$ and then use this bound to derive the $\mathcal{O}\left(\frac{1}{n^2}\right)$ asymptotic for $\mathbb{E}\left[ R_2(\mu - \bar{p}^*_n) \right]$.

Let us fix some $0 < \varepsilon < \mu$ and split the expectation of the remainder into two terms:
\begin{equation}
    \label{eq:rem_term}
    \mathbb{E}\left[R_m(\mu - \bar{p}^*_n)\right] = \mathbb{E}\bigl[R_m(\mu - \bar{p}^*_n) \cdot \mathds{1}[|\bar{p}^*_n - \mu| > \varepsilon] \bigr] + \mathbb{E}\bigl[R_m(\mu - \bar{p}^*_n) \cdot \mathds{1}[|\bar{p}^*_n - \mu| \le \varepsilon] \bigr].
\end{equation}
Applying the Hoeffding’s inequality to $\bar{p}^*_n$ allows us to exponentially bound the probability of deviation from $\mu$ by $\varepsilon$:
\begin{equation}
    \label{eq:hoeffding}
    P(|\bar{p}^*_n - \mu| > \varepsilon) \le 2\exp\left(-2 n\varepsilon^2\right).
\end{equation}
Taking inequality~\eqref{eq:hoeffding} into account, we can bound the first term in~\eqref{eq:rem_term} as follows:
\begin{equation}
    \label{eq:rem_term_1}
    \left| \mathbb{E}\bigl[R_m(\mu - \bar{p}^*_n) \cdot \mathds{1}[|\bar{p}^*_n - \mu| > \varepsilon] \bigr] \right| \le 2M\exp\left(-2 n\varepsilon^2\right),
\end{equation}
where $M = \max_{\bar{p}^*_n \in [\epsilon, 1]} \left|R_m(\mu - \bar{p}^*_n)\right|$.

In the second term of~\eqref{eq:rem_term}, as 
$|\bar{p}^*_n - \mu| \le \varepsilon < \mu$, the remainder term can be expanded as a Taylor series $R_m(\mu - \bar{p}^*_n) = \sum_{k=m+1}^{\infty} \frac{(\mu - \bar{p}^*_n)^k}{k \mu^k}$ and thus bounded as follows:
\begin{equation}
    \left| R_m(\mu - \bar{p}^*_n) \right| \le \sum_{k=m+1}^{\infty} \frac{|\mu - \bar{p}^*_n|^k}{k \mu^k} \le \sum_{k=m+1}^{\infty} \frac{\varepsilon^k}{k \mu^k}.
\end{equation}
From that we derive the following bound on the second term in~\eqref{eq:rem_term}:
\begin{equation}
    \label{eq:rem_term_2}
    \left| \mathbb{E}\bigl[R_m(\mu - \bar{p}^*_n) \cdot \mathds{1}[|\bar{p}^*_n - \mu| \le \varepsilon] \bigr] \right| \le \sum_{k=m+1}^{\infty} \frac{\varepsilon^k}{k \mu^k} = \mathcal{O}\left(\varepsilon^{m+1} \right).
\end{equation}
If we fix any $0 < \delta < 1$ and choose the following sequence of $\varepsilon_n = n^{-\frac{1 - \delta}{2}}$,\footnote{As $\delta < 1$, $\varepsilon_n \to 0$ monotonically and from some $n$ the condition $0 < \varepsilon_n < \mu$ is fulfilled.} the first bound~\eqref{eq:rem_term_1} becomes $2M\exp\left(-2 n^\delta\right)$, i.\,e. exponential in $n$, and the second bound~\eqref{eq:rem_term_2} becomes $\mathcal{O}\left(n^{-\frac{(m + 1)(1 - \delta)}{2}} \right)$. By taking $\delta = \frac{1}{5}$, we obtain that $\mathbb{E}\left[R_4(\mu - \bar{p}^*_n)\right] = \mathcal{O}\left( \frac{1}{n^2} \right)$.

To conclude the proof, we note that 
\begin{equation}
    \mathbb{E}\left[R_2(\mu - \bar{p}^*_n)\right] = \frac{\mathbb{E}\left(\mu - \bar{p}^*_n\right)^3}{3\mu^3} + \frac{\mathbb{E}\left(\mu - \bar{p}^*_n\right)^4}{4\mu^4} + \mathbb{E}\left[R_4(\mu - \bar{p}^*_n)\right]
\end{equation} 
and 
$\mathbb{E}\left(\mu - \bar{p}^*_n\right)^3 = \frac{\mathbb{E}\left(\mu - p_i^*\right)^3}{n^2} = \mathcal{O}\left(\frac{1}{n^2}\right)$, 
$\mathbb{E}\left(\mu - \bar{p}^*_n\right)^4 = \frac{3\sigma^4}{n^2} + \mathcal{O}\left(\frac{1}{n^3}\right) = \mathcal{O}\left(\frac{1}{n^2}\right)$.
\end{proof}

\subsection{Power law in NLL with temperature applied after averaging}
\label{app:theory_nll_temp_after}

Here we consider the power-law behaviour of the calibrated NLL of the ensemble, when the temperature is applied after averaging. Recall that $K$ is the number of classes and operator $*$ denotes retrieving the prediction for the correct class. Denote $\bar{p}_n = \frac{1}{n} \sum_{i=1}^n p_i$, $\mu = \mathbb{E} p_i$. Then the single-object model-average NLL has the following form (suppose, without loss of generality, that the first class is the correct one):
\begin{equation}
    \label{eq:nll_temp_after}
    \mathrm{NLL}_{n}^{\mathrm{obj}}(\tau) = -\mathbb{E} \log \left(
    \mathrm{softmax} \bigl\{ \bigl( \log (\bar{p}_n)\bigr) \bigl/ \tau \bigr\}^* \right) = \mathbb{E} \left[-\gamma \log (\bar{p}_{n1}) + \log\left(\sum_{k=1}^K \bar{p}_{nk}^{\gamma} \right) \right],
\end{equation}
where $\gamma = \frac{1}{\tau}$. By applying a similar Taylor expansions trick as in Proposition~\ref{prop:pl_ens} to the function $f(p) = -\gamma \log(p_1) + \log\left(\sum_{k=1}^K p_{k}^{\gamma} \right)$ under expectation in~\eqref{eq:nll_temp_after}, we arrive at:
\begin{equation}
    \label{eq:nll_temp_after_pl}
    \mathrm{NLL}_{n}^{\mathrm{obj}}(\tau) \approx f(\mu) + \frac{1}{2n} \sum_{k,k'=1}^K cov(p_{ik}, p_{ik'}) \left.\frac{\partial^2 f}{\partial p_k \partial p_{k'}}\right\vert_{p = \mu},
\end{equation}
where
\begin{equation}
    \label{eq:f_hess}
    \frac{\partial^2 f}{\partial p_k \partial p_{k'}} = \frac{\gamma}{p_1^2}[k=1][k'=1] - \frac{\gamma^2 p_k^{\gamma-1}p_{k'}^{\gamma-1}}{\left( \sum_{s=1}^K p_s^{\gamma} \right)^2} + [k=k']\frac{\gamma(\gamma-1)p_k^{\gamma-2}}{\sum_{s=1}^K p_s^{\gamma}}.
\end{equation}
We can see that the first term in~\eqref{eq:f_hess} is non-negative, while the second one is, on the contrary, always negative. High values of temperature (i.\,e. $\gamma < 1$) make the last term negative as well, which, in turn, may lead to negative $b$ coefficient in the power law of the right-hand part of~\eqref{eq:nll_temp_after_pl}. We observe this effect in practice: at certain values of temperature applied after averaging, the NLL starts \emph{increasing} as a function of $n$, see Appendix~\ref{app:cnll_2}. We could not apply a similar technique, as was provided in section~\ref{app:theory_prop_proof}, to derive the exact asymptote in~\eqref{eq:nll_temp_after_pl}, when the temperature is applied after averaging. 

We note, however, that when the temperature $\tau$ is applied \emph{before} averaging, i.\,e.\,fixed\footnote{We use the same value of $\tau$ for all member networks.} $\tau$ is used to compute the final member networks predictions: $p_i(\tau) = \mathrm{softmax} \{ \log  (p_{i}) / \tau \}$, the expression for $\mathrm{NLL}^{\mathrm{obj}}_n (\tau)$ fits the form of~\eqref{eq:nll_ens_single}:
\begin{equation}
    \mathrm{NLL}^{\mathrm{obj}}_n (\tau) = -\mathbb{E} \log \left( \frac{1}{n} \sum_{i=1}^n p_{i}^*(\tau) \right),
\end{equation}
and hence Proposition~\ref{prop:pl_ens} is applicable. The power law parameter $b = \frac{\sigma^2}{2 \mu^2}$ in this case is always positive.

\subsection{Lower envelope of power laws with continuous temperature}
\label{app:theory_lower_env_cont_temp}

In this subsection, we show that even when the set of temperatures is uncountable, the lower envelope of power laws asymptotically follows the power law. This argument generalizes our discussion at the end of section~\ref{sec:our_theory}. 

\begin{prop}
\label{prop:le_pl}
Consider a compact set $\mathcal{T}$ and two continuous mappings $c : \mathcal{T} \to \mathbb{R}$ and $b : \mathcal{T} \to \mathbb{R}$. Let each value $\tau \in \mathcal{T}$ correspond to a certain power law w.\,r.\,t.\,$n$:
\begin{equation}
    \label{eq:pl_cont}
    \mathrm{PL}_n(\tau) = c(\tau) + \frac{b(\tau)}{n}.
\end{equation}
Then the lower envelope $\mathrm{LE}_n = \min_{\tau \in \mathcal{T}} \mathrm{PL}_n(\tau)$ of the power laws~\eqref{eq:pl_cont} follows a power law asymptotically.
\end{prop}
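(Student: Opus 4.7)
The plan is to show that $\mathrm{LE}_n = c^\star + b^\star/n + o(1/n)$ as $n\to\infty$, which is the precise meaning of ``follows a power law asymptotically'' used at the end of Section~\ref{sec:our_theory}. First I would define $c^\star := \min_{\tau \in \mathcal{T}} c(\tau)$, attained because $c$ is continuous on the compact set $\mathcal{T}$, and $\mathcal{T}^\star := \{\tau \in \mathcal{T}: c(\tau) = c^\star\}$, which is closed as the preimage of the singleton $\{c^\star\}$ under a continuous map and hence compact. On $\mathcal{T}^\star$ the function $b$ is continuous, so $b^\star := \min_{\tau \in \mathcal{T}^\star} b(\tau)$ is also attained, say at some $\tau^\star$. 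The upper bound is then immediate by evaluating the envelope at $\tau^\star$: $\mathrm{LE}_n \le \mathrm{PL}_n(\tau^\star) = c^\star + b^\star/n$.

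For the matching lower bound I would rescale and study $F_n(\tau) := n\bigl(c(\tau) - c^\star\bigr) + b(\tau)$, noting that $n(\mathrm{LE}_n - c^\star) = \min_{\tau \in \mathcal{T}} F_n(\tau)$ is attained at some $\tau_n \in \mathcal{T}$ by continuity and compactness. From the upper bound, $F_n(\tau_n) \le b^\star$, and since $b$ is bounded on $\mathcal{T}$, this forces $n\bigl(c(\tau_n) - c^\star\bigr) \le b^\star - b(\tau_n) = O(1)$, so $c(\tau_n) \to c^\star$. Any accumulation point of $\{\tau_n\}$ therefore lies in $\mathcal{T}^\star$. Passing to a convergent subsequence $\tau_{n_k} \to \tau_\infty \in \mathcal{T}^\star$ and using continuity of $b$ together with the nonnegativity of $c - c^\star$ gives
\begin{equation*}
\liminf_{k} F_{n_k}(\tau_{n_k}) \;\ge\; \liminf_{k} b(\tau_{n_k}) \;=\; b(\tau_\infty) \;\ge\; b^\star.
\end{equation*}
Since the argument applies along every subsequence of $\{\tau_n\}$, $\liminf_n n(\mathrm{LE}_n - c^\star) \ge b^\star$, and combined with the upper bound this yields $n(\mathrm{LE}_n - c^\star) \to b^\star$, hence $\mathrm{LE}_n = c^\star + b^\star/n + o(1/n)$.

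The main technical point is the handling of the minimizers $\tau_n$: they need not converge, so the argument must proceed along subsequences and lean on the compactness of $\mathcal{T}^\star$ to ensure every accumulation point falls inside it. Everything else is an elementary application of continuity, compactness and the nonnegativity of $c - c^\star$; no delicate estimates or bounds on higher-order remainders are needed, in sharp contrast with the proof of Proposition~\ref{prop:pl_ens}.
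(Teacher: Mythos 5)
Your proof is correct, and it takes a genuinely different route from the paper's. The paper works directly with the sequence of minimizers $\tau_n$ and exploits the two optimality inequalities relating $\mathrm{PL}_n(\tau_n)$, $\mathrm{PL}_n(\tau_{n+1})$, $\mathrm{PL}_{n+1}(\tau_n)$, $\mathrm{PL}_{n+1}(\tau_{n+1})$ to deduce that $c(\tau_n)$ is non-increasing and $b(\tau_n)$ is non-decreasing; a convergent subsequence $\tau_{n_k}\to\tau^*$ then forces monotone convergence of the full sequences $c(\tau_n)\to c(\tau^*)$, $b(\tau_n)\to b(\tau^*)$, and the bound $0\le \mathrm{PL}_n(\tau^*)-\mathrm{LE}_n\le (b(\tau^*)-b(\tau_n))/n = o(1/n)$ finishes the argument. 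You instead identify the limiting parameters \emph{a priori} as the lexicographic minimum --- $c^\star=\min_\mathcal{T} c$ and $b^\star=\min_{\mathcal{T}^\star} b$ over the argmin set $\mathcal{T}^\star$ --- and prove the two-sided bound by rescaling to $F_n(\tau)=n(c(\tau)-c^\star)+b(\tau)$ and running a standard subsequence/accumulation-point argument. What your version buys is an explicit, checkable characterization of the asymptotic parameters ($c^\star$, $b^\star$ as a nested minimization), and it avoids having to discover the monotonicity trick; what the paper's version buys is some extra structure (the monotone approach of $c(\tau_n)$ and $b(\tau_n)$ to their limits) essentially for free from the consecutive-index inequalities. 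Both arguments lean on the same ingredients --- compactness of $\mathcal{T}$, continuity of $b$ and $c$, and extraction of a convergent subsequence of minimizers --- and both reach the same conclusion $\mathrm{LE}_n=c^\star+b^\star/n+o(1/n)$. One small presentational point: when you assert that the argument ``applies along every subsequence,'' it is cleaner to say you pass to a subsequence along which $F_n(\tau_n)$ attains its $\liminf$ and then extract a convergent sub-subsequence of $\tau_n$; the logic is the same but the quantifiers are tidier.
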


\begin{proof}
Let $\tau_n \in \underset{\tau \in \mathcal{T}}{\mathrm{Argmin}}\, \mathrm{PL}_n(\tau)$ be the value which minimizes $\mathrm{PL}_n(\tau)$ at given $n$. Then the lower envelope of power laws~\eqref{eq:pl_cont} can be defined as
\begin{equation}
    \label{eq:le_cont}
    \mathrm{LE}_n = \mathrm{PL}_n(\tau_n).
\end{equation}
We need to show that $\exists\, c^*, b^* \in \mathbb{R}$:
\begin{equation}
    \label{eq:le_pl}
    \mathrm{LE}_n = c^* + \frac{b^*}{n} + o\left(\frac{1}{n}\right).
\end{equation}
By the definition of $\tau_n$, the following inequalities hold:
\begin{equation}
    \label{eq:le_ineq}
    \begin{cases}
    c(\tau_n) + \frac{b(\tau_n)}{n} \le c(\tau_{n+1}) + \frac{b(\tau_{n+1})}{n} \\
    c(\tau_{n+1}) + \frac{b(\tau_{n+1})}{n+1} \le c(\tau_{n}) + \frac{b(\tau_{n})}{n+1}
    \end{cases}
    \implies
    \begin{cases}
    c(\tau_{n+1}) \le c(\tau_{n})\\
    b(\tau_{n}) \le b(\tau_{n+1}).
    \end{cases}
\end{equation}
The first of the right inequalities in~\eqref{eq:le_ineq} can be obtained via multiplication of the left inequalities by $n$ and $n+1$, respectively, and summation. The second inequality is obtained after substituting the first one into the right-hand part of the first left inequality.

Now, as $\{\tau_n\} \subset \mathcal{T}$, which is a compact, there exists a subsequence $\tau_{n_k} \to \tau^* \in \mathcal{T}$ which implies that $b(\tau_n) \to b(\tau^*)$, $c(\tau_n) \to c(\tau^*)$ monotonically due to continuity of mappings $b(\tau)$, $c(\tau)$ and the right inequalities in~\eqref{eq:le_ineq}. Finally, consider the difference between $\mathrm{LE}_n$ and $\mathrm{PL}_n(\tau^*)$:
\begin{equation}
    0 \le \mathrm{PL}_n(\tau^*) - \mathrm{LE}_n = c(\tau^*) - c(\tau_n) + \frac{b(\tau^*) - b(\tau_n)}{n} \le \frac{b(\tau^*) - b(\tau_n)}{n}.
\end{equation}
As $b(\tau^*) - b(\tau_n) = o(1)$, we come to~\eqref{eq:le_pl} with $c^* = c(\tau^*)$, $b^* = b(\tau^*)$.
\end{proof}

As was shown in section~\ref{sec:our_theory}, $\mathrm{NLL}_n(\tau)$ follows a power law asymptotically for any fixed $\tau>0$. Due to continuity of a softmax function w.\,r.\,t.\,$\tau$, we could deduce that the parameters $b$ and $c$ of the respective power law are also continuous functions of temperature. Finally, appropriately choosing the temperatures set $\mathcal{T}$ (namely, separate from zero~--- the singularity point) allows to conclude that deviation of $\mathrm{NLL}_n(\tau)$ from its power law is $o(\frac{1}{n})$ uniformly for all $\tau \in \mathcal{T}$ and hence Proposition~\ref{prop:le_pl} is applicable to the lower envelope of $\mathrm{NLL}_n(\tau)$ as well. 

\section{Experimental details}
\label{app:experimental_setup}

{\bf Data.$\quad$} We conduct experiments on CIFAR-100 and CIFAR-10 datasets, each containing 50000 training and 10000 testing examples. For tuning hyperparameters, we randomly select 5000 training examples as a validation set. After choosing optimal hyperparameters, we retrain the models on the full training dataset. We use a standard data augmentation scheme: zero-padding with 4 pixels on each side, random cropping to produce $32\times32$ images, and horizontal mirroring with probability $0.5$.

{\bf Details.$\quad$} We consider two architectures: VGG and WideResNet. We use the implementation provided at \url{https://github.com/timgaripov/dnn-mode-connectivity}.To obtain networks of different sizes, we vary the width factor of the networks: for VGG\,/\,WideResNet, we use convolutional layers with $[w, 2w, 4w, 8w]$\,/\,$[w, 2w, 4w]$ filters, and fully-connected layers with $8w$\,/\,$4w$ neurons. For VGG\,/\,WideResNet, we consider $2 \leqslant w \leqslant 181$ / $5 \leqslant w \leqslant 453$; $w=64$\,/\,$160$ corresponds to a standard, commonly used, configuration with $s_{\mathrm{standard}}$ = 15.3M / 36.8M parameters. These sizes are referred to as the standard budgets. For VGG, we use weight decay, and binary dropout for fully-connected layers. For WideResNet, we use weight decay and batch normalization. For each considered width factor, we tune the hyperparameters using grid search with the following grids: learning rate --- $ \{ 0.005, 0.05 \}$\,/\,$\{ 0.01, 0.1 \}$ for VGG\,/\,WideResNet, weight decay --- $ \{10^{-4}, 3\cdot10^{-4}, 10^{-3}, 3\cdot 10^{-3}\}$, dropout rate --- $\{ 0, 0.25, 0.5 \}$. We train all models for 200 epochs using SGD with momentum of $0.9$ and the following learning rate schedule: constant (100 epochs) -- linearly annealing (80 epochs) --  constant (20 epochs). The final learning rate is 100 times smaller than the initial one. We use the batch size of 128.

In several experiments presented in the Appendix, we train networks without regularization, with all hyperparameters being the same for all network sizes. By this, we mean that we set weight decay and dropout rate to zero, do not use data augmentation, and use an initial learning rate 10 times smaller than in the reference implementation to ensure that the training converges for all considered models.

\textbf{Computing infrastructure.$\quad$}
VGG networks were trained on NVIDIA Teals P100 GPU. Training one network of the standard size / smallest considered size / largest considered size took approximately 1 hour / 20 minutes / 4.5 hours. WideResNet networks were trained on NVIDIA Tesla V100 GPU. Training one network of the standard size / smallest considered size / largest considered size took approximately 5.5 hours / 50 minutes / 32 hours.

\textbf{Approximating sequences with power laws.$\quad$}
To approximate a sequence $\hat y_m$, $m \geqslant 1$ with a power law, we solve optimization task~\eqref{fit_loss} using BFGS and performing computations with the double-precision floating point numbers (float64) to avoid overflow when computing the logarithms of nearly zero values. If we consider a uniform grid for $m$ in the linear scale, then in the logarithmic scale the density of the points in the area with large $m$ is much higher than in the area with small $m$. To account for this variable density, we weight terms corresponding to different values of $m$ in~\eqref{fit_loss}: $w_m=\frac 1 m$. When we report the quality of approximation measured with RMSE, we use uniform weights $w_m=\frac 1 M$. The described heuristic is utilized for approximating $\widehat{\mathrm{(C)NLL}}_n$, since we have a uniform grid for $n$ in the linear scale, and is not utilized for approximating $\widehat{\mathrm{CNLL}}_s$ and $\widehat{\mathrm{CNLL}}_B$, since for these sequences we have a uniform grid for $s$\,/\,$B$ in the logarithmic scale. 

\textbf{Test-time cross-validation.$\quad$} \citet{pitfalls} utilize a so-called \emph{test-time cross-validation} to obtain an unbiased, low-variance estimate of the CNLL using the publicly available test set. The test-time cross-validation implies that the test set is randomly split into two equal parts five times. For each split, one part is used to find the optimal temperature, and the other one --- to measure the CNLL, and vice versa. Finally, the CNLL is averaged over ten measurements.

\section{Calibration of ensembles: applying temperature before or after averaging}

\subsection{Difference in CNLL}
\label{app:cnll_1}
In section~\ref{sec:our_theory}, we introduced two ways of applying temperature to the ensemble, namely before and after averaging. In this subsection, we empirically compare these two calibration procedures.  Figure~\ref{fig:setup_1_2_diff} shows the results for VGG on CIFAR-100, for setting with and without regularization. The difference between CNLL values is low in all the cases, hence the two procedures perform similarly. In most of the cases, particularly in practically important cases of ensembling networks of medium and large sizes, calibration with applying temperature before averaging performs slightly better (there are a lot of green pixels in the heatmaps). The results for other dataset-architecture pairs are similar. To sum up, the procedure with applying temperature before averaging can be used in practice instead of the standard one, with temperature applied after averaging, without loss in the quality.

\begin{figure}
  \centering
  \includegraphics[width=0.75\textwidth]{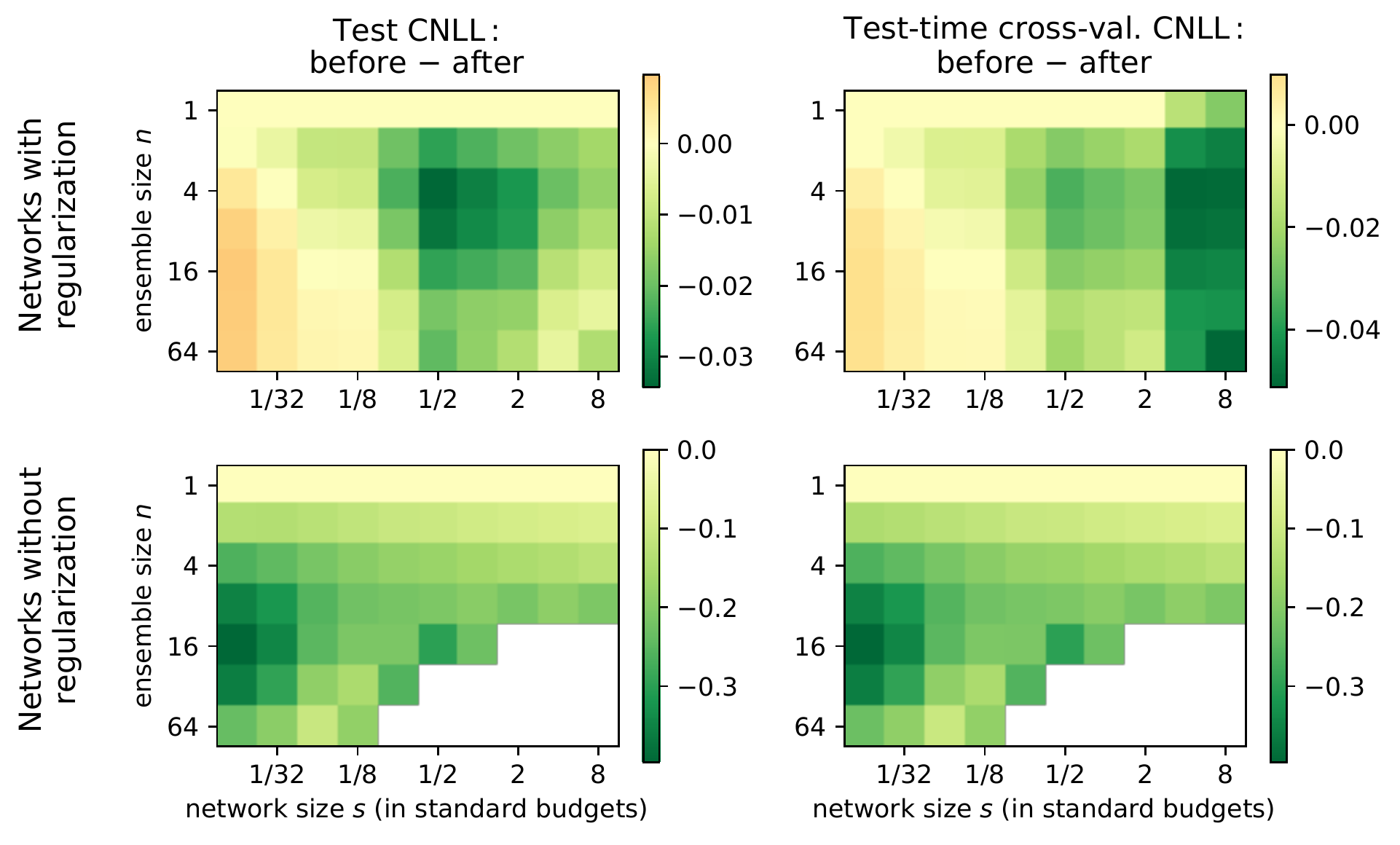}
  \caption{Difference between $\mathrm{CNLLs}$ computed with applying temperature {\it before} and {\it after} averaging for different values of ensemble size $n$ and network size $s$, for VGG on CIFAR-100. Left: the optimal temperature for CNLL is chosen using the whole test set. Right: CNLL is computed using test-time cross-validation.}
  \label{fig:setup_1_2_diff}
\end{figure}

\subsection{Dynamics of NLL with fixed temperature and CNLL}
\label{app:cnll_2}
\begin{figure}
  \centering
  \includegraphics[width=0.75\textwidth]{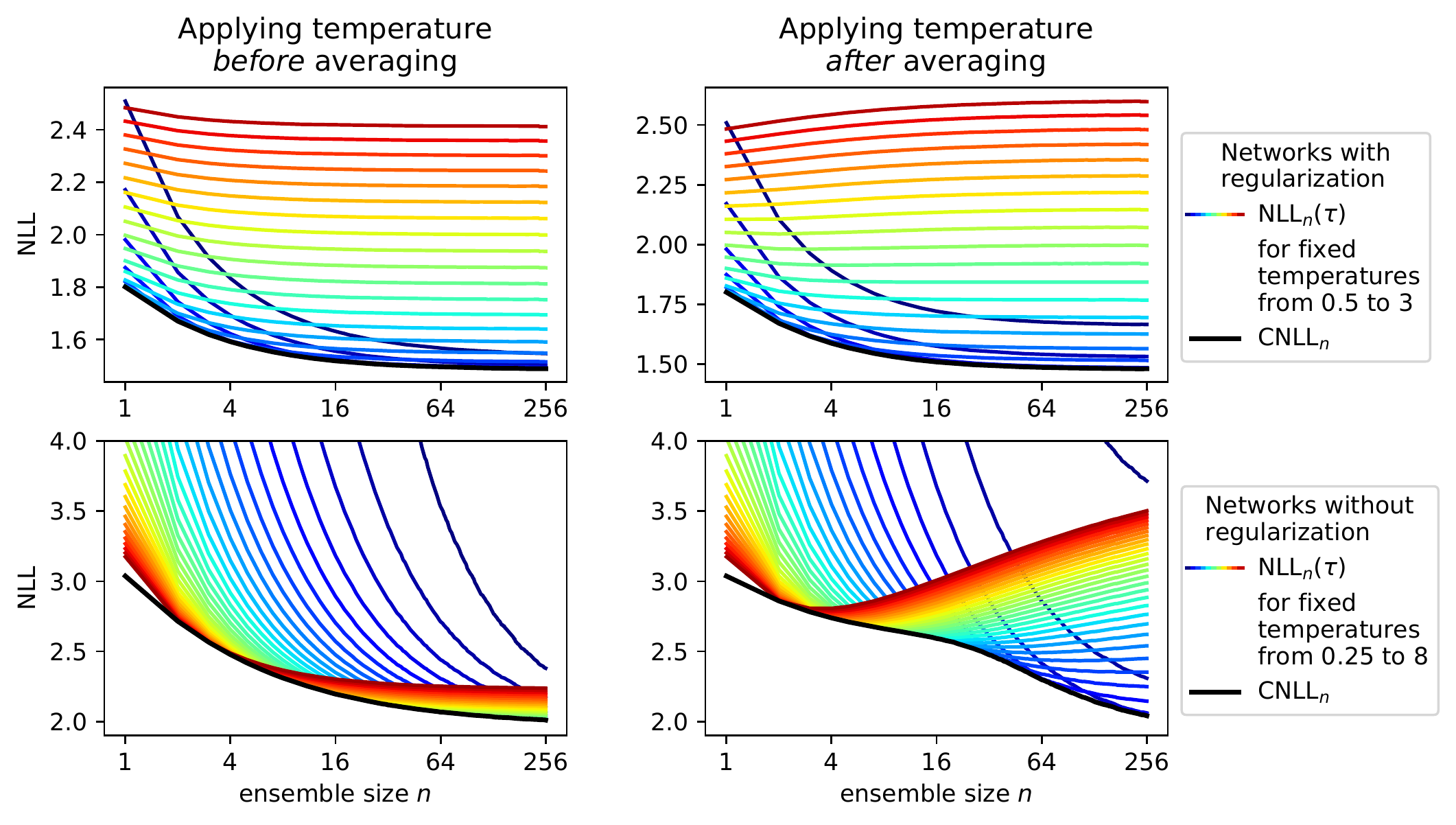}
  \caption{$\mathrm{NLL}_n(\tau)$ for different values of $\tau$ and $\mathrm{CNLL}_n$. VGG on CIFAR-100, the smallest considered network size (1/64 of standard budget). The blue\,/\,red color corresponds to the low\,/\,high temperatures.}
  \label{fig:setup_1_2}
\end{figure}
In this subsection, we illustrate the behaviour of $\mathrm{NLL}_n(\tau)$ and $\mathrm{CNLL}_n$ for both ways of applying temperature. Figure~\ref{fig:setup_1_2} shows the results for VGG on CIFAR-100, for setting with and without regularization, for the smallest considered network size. When the temperature is applied before averaging, in practice, both $\mathrm{NLL}_n(\tau)$ and $\mathrm{CNLL}_n$ follow a power law as functions of $n$, with positive parameter $b$, as can be seen from figure~\ref{fig:setup_1_2}, left column. When the temperature is applied after averaging, $\mathrm{NLL}_n(\tau)$ starts growing w.\,r.\,t.\,$n$ for high values of $\tau$, as can be seen from figure~\ref{fig:setup_1_2}, right. As a result, $\mathrm{CNLL}_n$ in this case may be non-convex, see the bottom right plot. We observed the effect of $\mathrm{NLL}_n(\tau)$ increase in a wide range of settings, while the effect of  $\mathrm{CNLL}_n$ non-convexity was observed only for small unregularized networks. In the most cases, $\mathrm{CNLL}_n$ with temperature applied after averaging also follows a power law.

\section{Comparison of CNLL and LE-NLL}
\label{app:two_types_nll}
\begin{figure}
  \begin{center}
\centerline{
 \begin{tabular}{cc}
  \includegraphics[height=30mm]{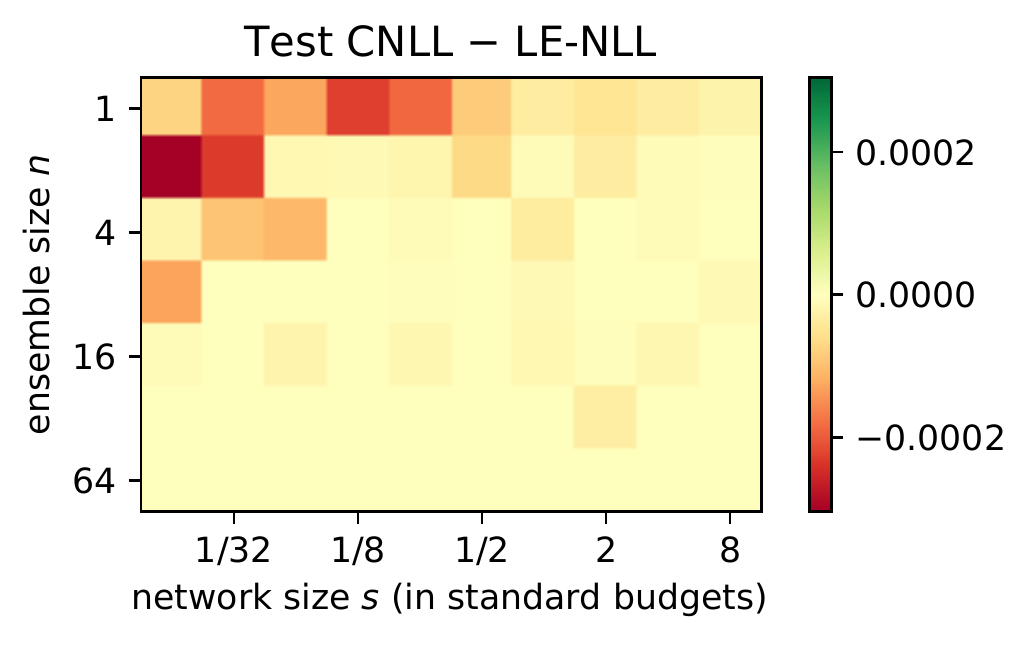}&
\includegraphics[height=30mm]{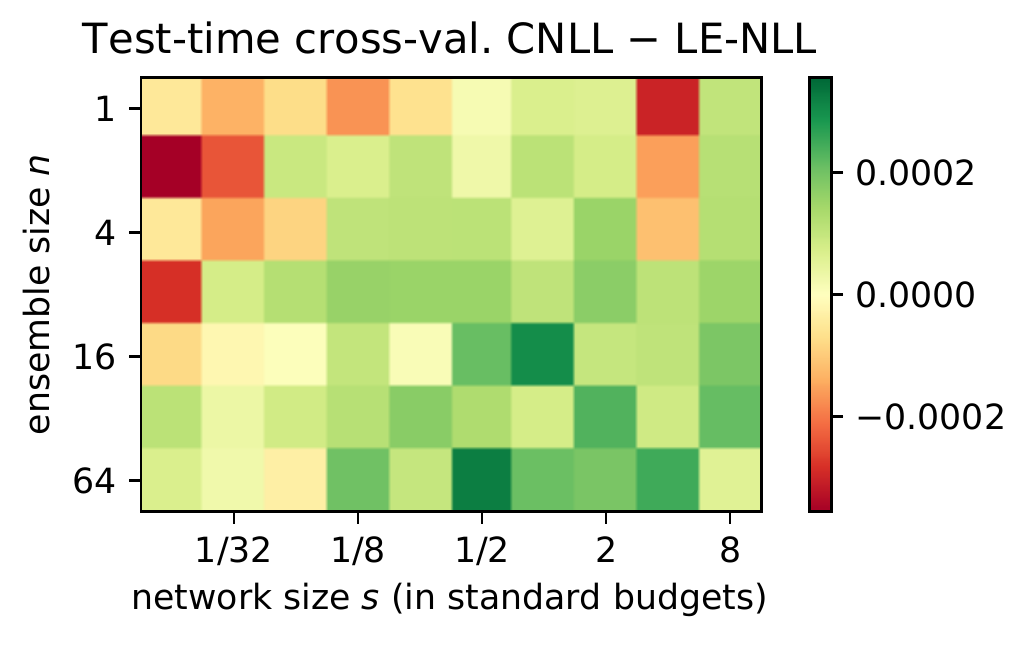}
  \end{tabular}}
  \caption{Difference between CNLL~\eqref{eq:true_cnll} and LE-NLL~\eqref{eq:le_nll} for different values of ensemble size $n$ and network size $s$, for VGG on CIFAR-100. Left: the optimal temperature for CNLL is chosen using the whole test set. Right: CNLL is computed using test-time cross-validation.}
  \label{fig:cnll_vs_lenll}
  \end{center}
\end{figure}
In this section, we empirically show that moving the minimum operation outside the expectation in equation~\eqref{eq:true_cnll} does not change the value of CNLL a lot in practice. In other words, we compare the values of CNLL~\eqref{eq:true_cnll}, commonly used in practice, and LE-NLL~\eqref{eq:le_nll}, utilized in section~\ref{sec:our_theory}, for different dataset--architecture pairs. We consider two scenarios for computing CNLL: (a) when the optimal temperature is chosen using the whole test set, as it is the case for LE-NLL, and (b) when the test-time cross-validation is utilized~\cite{pitfalls}. In figure~\ref{fig:cnll_vs_lenll} we depict the difference between CNLL and LE-NLL for different values of ensemble size $n$ and network size $s$, for VGG on CIFAR-100. We observe that the difference is negligible, compared to the values of LE-NLL. The relative difference for all values of $n$ and $s$ is bounded by $0.018\%$\,/\,$0.29\%$ for scenarios (a) and (b) respectively. For WideResNet on CIFAR-100, the relative difference is bounded by $0.081\%$\,/\,$0.61\%$, for VGG on CIFAR-10 --- $0.054\%$\,/\,$0.46\%$, for WideResNet on CIFAR-10 --- $0.023\%$\,/\,$0.45\%$  for scenarios (a) and (b) respectively. In all the the experiments in the paper, we use CNLL computed using test-time cross-validation.

\section{NLL with fixed temperature as a function of ensemble size}
\subsection{Power law approximation for different dataset--architecture pairs}
\label{app:nll_fixed_temp_setup2}

\begin{figure}
  \begin{center}
\centerline{
 \begin{tabular}{c}
 WideResNet on CIFAR-100\\
  \includegraphics[width=\textwidth]{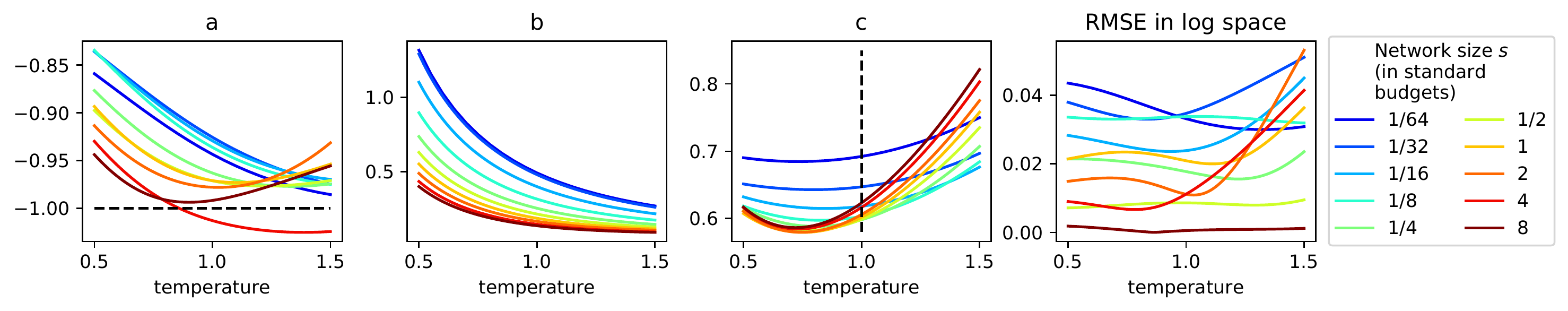}\\
VGG on CIFAR-10\\
\includegraphics[width=\textwidth]{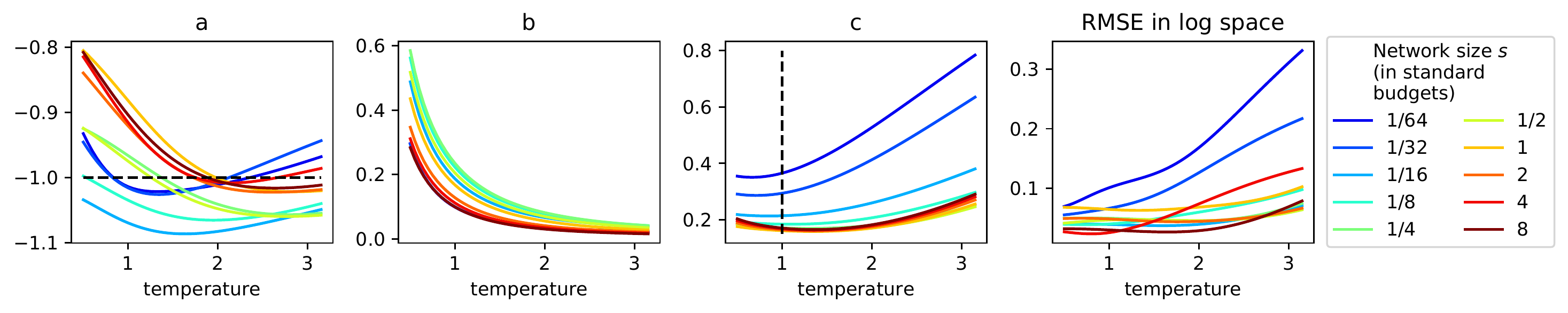}\\
WideResNet on CIFAR-10\\
  \includegraphics[width=\textwidth]{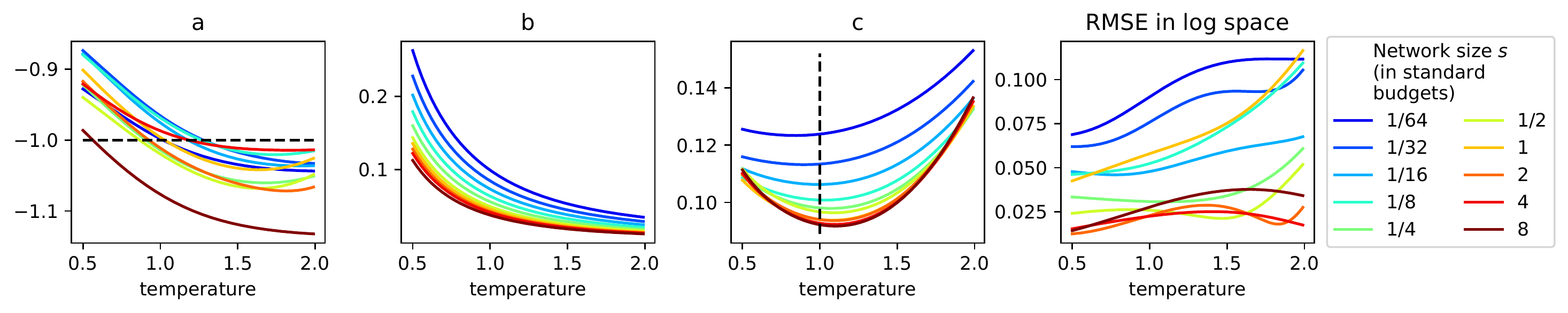}
  \end{tabular}}
  \caption{Parameters of power laws and the quality of approximation for $\mathrm{NLL}_n(\tau)$ with fixed temperature $\tau$.}
  \label{fig:ens_fixed_t_other}
  \end{center}
\end{figure}

In this subsection, we show that $\mathrm{NLL}_n(\tau)$ can be closely approximated with a power law as a function of $n$, on the whole considered range $n \geqslant 1$, for all values of $\tau>0$, and for different dataset--architecture pairs. Figure~\ref{fig:ens_fixed_t_other} supplements figure~\ref{fig:ens_fixed_t}  for other dataset--architecture pairs, and shows the quality of approximation of $\mathrm{NLL}_n(\tau)$ with power laws as well as the dynamics of power law parameters $a$, $b$, $c$.

For a particular network size, parameter $a$ is generally a decreasing function of temperature $\tau$. The described effect does not hold for the ensembles of large WideResNet networks, because for them we only consider small ensembles due to the resource limitations, so the approximation is performed for only a few points and is  slightly unstable. The described effect also does not hold for the ensembles of small VGG networks, since applying high temperatures leads to the noise in $\mathrm{NLL}_n(\tau)$ for large $n$ and makes the approximation slightly worse than for other settings, as confirmed in the rightmost plot. This approximation is still very close, see appendix~\ref{A:large_sized_t}.

Parameter $b$ decreases when the temperature grows, since lower temperatures result in more contrast predictions, and ensembling smooths them. Parameter $c$ is a non-monotonic function of $\tau$, and its optimum reflects the optimal temperature for the ``infinite''-size ensemble. The optimal temperature may be greater or less than one, depending on the dataset--architecture combination. 

\subsection{Power law approximation for small networks with high temperature}
\label{A:large_sized_t}
Figure~\ref{fig:large_temp} visualizes the power law approximation of $\mathrm{NLL}_n(\tau)$ for VGG of the smallest considered width, with a standard temperature $\tau=1$ and a high temperature $\tau=3$. In log-space, we observe the visible fluctuations of $\mathrm{NLL}_n(\tau=3)$ for large ensembles. These fluctuations result in a small bias in the estimation of power law parameters, particularly parameter $a$. However, in the linear space, the fluctuations do not have a significant effect, and the approximation is close to the data. 

The reason for the fluctuations is that for high values of temperature and for large ensemble sizes $n$, the difference $\mathrm{NLL}_n(\tau)-c$ for optimal $c$ is extremely small, while the precision of the computation is limited. Particularly, the minimum value of $\log_2(\mathrm{NLL}_n(\tau)-c)$ is approximately equal to $-11$ for $\tau=3$ and to $-9$ for $\tau=1$, see the y-axes of figure~\ref{fig:large_temp}. We observe the fluctuations for the ensembles of small networks, since for small $s$ we consider the largest values of $n$.

\begin{figure}
  \begin{center}
\centerline{
 \begin{tabular}{c@{}c}
 VGG on CIFAR-100: temperature $\tau$ = 1&VGG on CIFAR-100: temperature $\tau$= 3\\
  \includegraphics[width=0.5\textwidth]{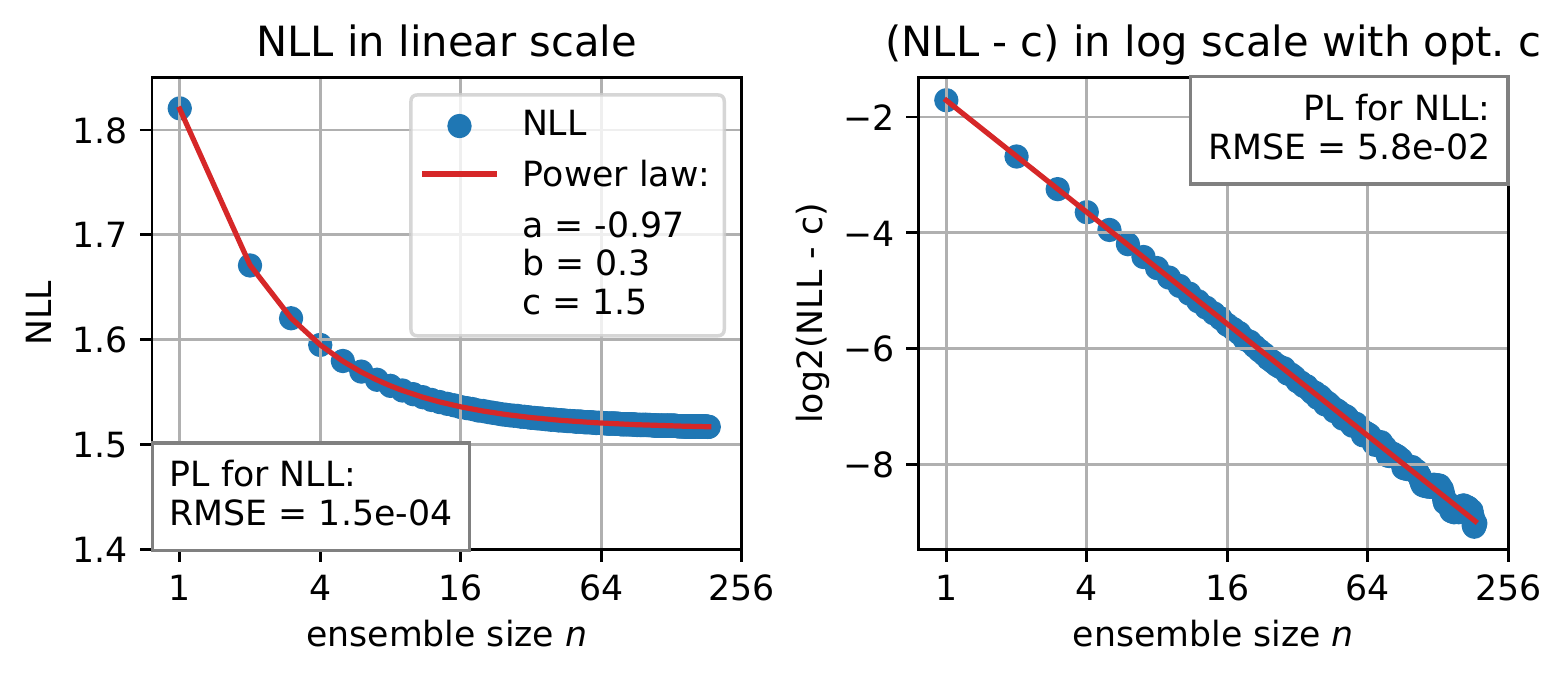}&
\includegraphics[width=0.5\textwidth]{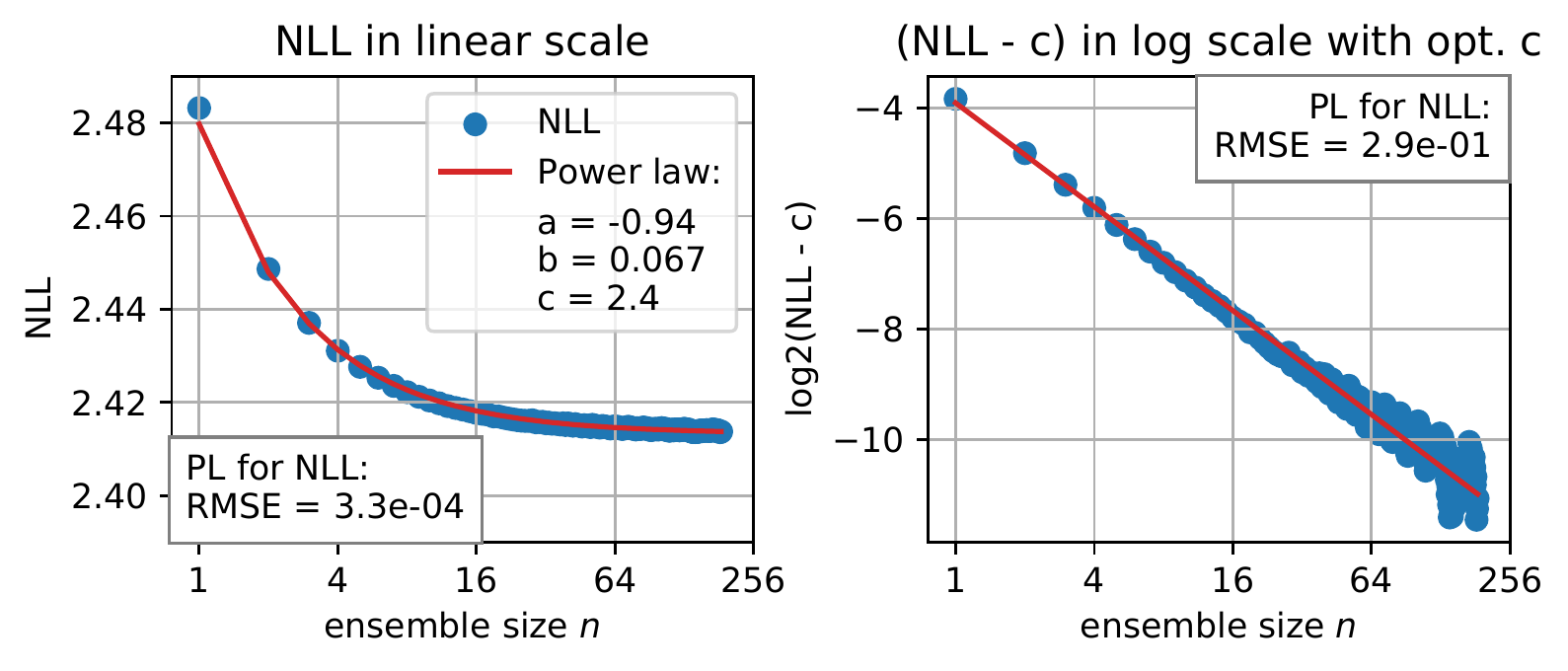}
  \end{tabular}}
  \caption{Power law approximation for $\mathrm{NLL}_n(\tau)$ for VGG of small size (1/64 of standard budget) on CIFAR-100, with the standard and high values of temperature. Approximations in both linear space and log-space are presented.}
  \label{fig:large_temp}
  \end{center}
\end{figure}

\section{Convergence of temperature}
\label{app:temp}
To show that the optimal temperature $\tau$ converges when ensemble size $n$ increases, we plot optimal $\tau$ vs. $n$ for different dataset--architecture pairs in figure~\ref{fig:temps}. We average the optimal temperature over runs, i.\,e.\,different trained ensembles, and over folds in test-time cross-validation. 

\begin{figure}
  \centering
  \includegraphics[width=\textwidth]{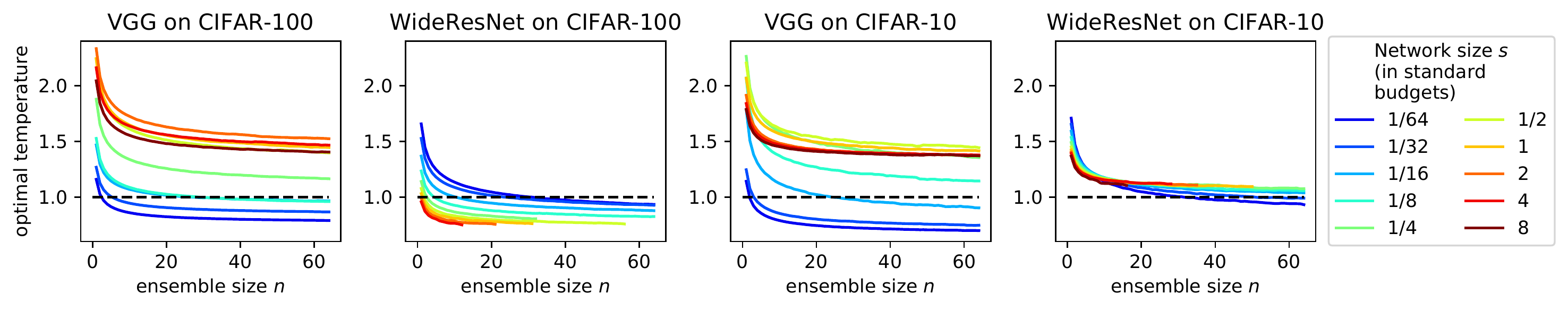}
  \caption{Optimal temperatures for different dataset--architecture pairs.}
  \label{fig:temps}
\end{figure}

\section{Power law approximation of CNLL as a function of ensemble size}
\label{cnll:setup2}
\begin{figure}[t]
  \centering
  \includegraphics[width=\textwidth]{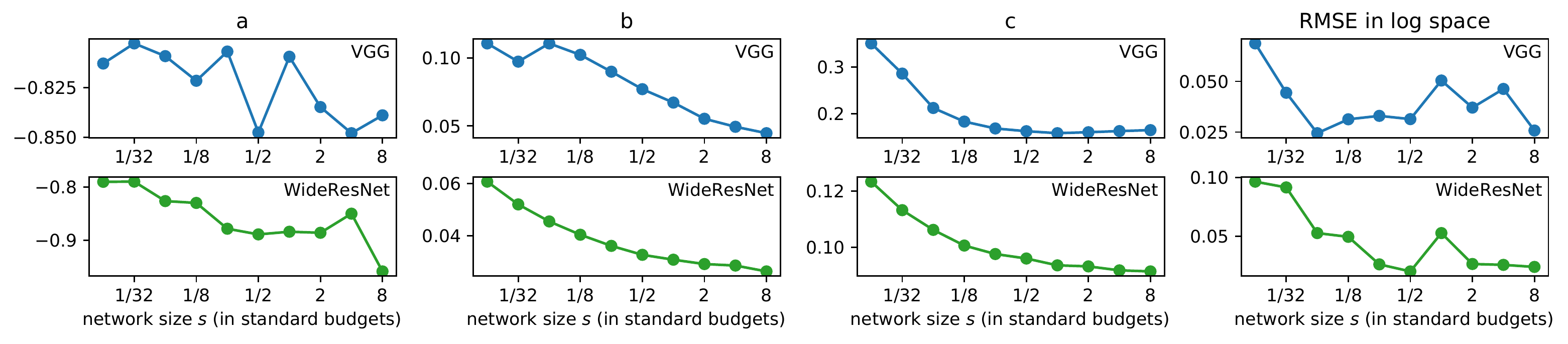}
  \caption{Parameters of power laws and the quality of approximation for $\mathrm{CNLL}_n$ for different network sizes $s$. VGG and WideResNet on CIFAR-10.}
  \label{fig:cnll_ens_other}
\end{figure}
Figure~\ref{fig:cnll_ens_other} supplements figure~\ref{fig:cnll_ens}, to show that $\mathrm{CNLL}_n$ with the temperature applied before averaging can be closely approximated with a power law on the whole considered range $n \geqslant 1$, for all considered dataset--architecture pairs. The rightmost plot reports the quality of approximation in the log-space. We notice that in the linear space, the RMSE is less than $5.9 \cdot 10^{-4}$ for all blue points, corresponding to VGG, and less $4.7\cdot10^{-4}$ for all green points, corresponding to WideResNet. The first three plots in each row visualize the dynamics of power law parameters $a$, $b$, $c$ as network size increases; these dynamics were discussed in section~\ref{sec:ensemble_size}.  

\section{CNLL of the ensemble of unregularized networks}
\label{app:noreg_b_c}
In this section, we analyse the CNLL of the ensembles of unregularized networks as a function of network size. Turning off regularization, i.\,e.\,weight decay and dropout, results in the clearly observed  effect that for large values of $n$, $\mathrm{CNLL}_s$ of the ensemble of size $n$ has optimum at some network size $s$. We firstly visualize this effect in figure~\ref{fig:noreg_carpet} showing the $(n , s)$ plane of CNLL for the setting without regularization, for VGG and WideResNet on CIFAR-100. For higher $n$ (e.\,g.\,$n>5$), the horizontal cuts are non-monotonic w.\,r.\,t.\,$s$. Secondly, we analyse the CNLL of the ``infinite''-size ensemble, using power laws discovered in the paper. Figure~\ref{fig:cnll_ens_noreg} visualizes the parameters of the power laws approximating $\mathrm{CNLL}_n$ for different values of $s$, for VGG and WideresNet on CIFAR-100. We can clearly observe that parameter $b$, that reflects the possible gain of ensembling the networks of size $s$, decreases as $s$ increases, while the parameter $c$, that reflects the CNLL of the ``infinite''-size ensemble, is non-monotonic, i.\,e.\,achieves optimum at some $s$ in the middle of the considered range of $s$. The similar effect was observed in figure~\ref{fig:cnll_ens} for the regularized setting, but in a less visible form. We notice that the decrease of parameter $b$ in the setting without regularization is much larger, than in the standard setting with regularization, and the effect of non-monotonicity of $c$ is also stronger.

The described effect of the non-monotonicity of $\mathrm{CNLL}_s$ for large ensembles may be a consequence of under-regularization of the large networks, or a consequence of the decreased diversity of the large networks~\cite{neal2018modern}, and needs further investigation.

\begin{figure}
\centerline{
  \begin{tabular}{cc}
 \includegraphics[height=35mm]{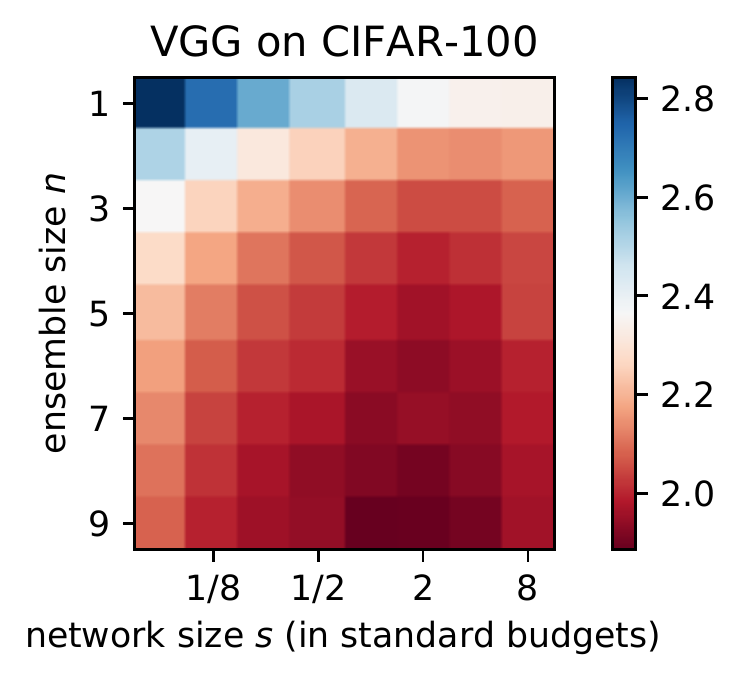}&
\includegraphics[height=35mm]{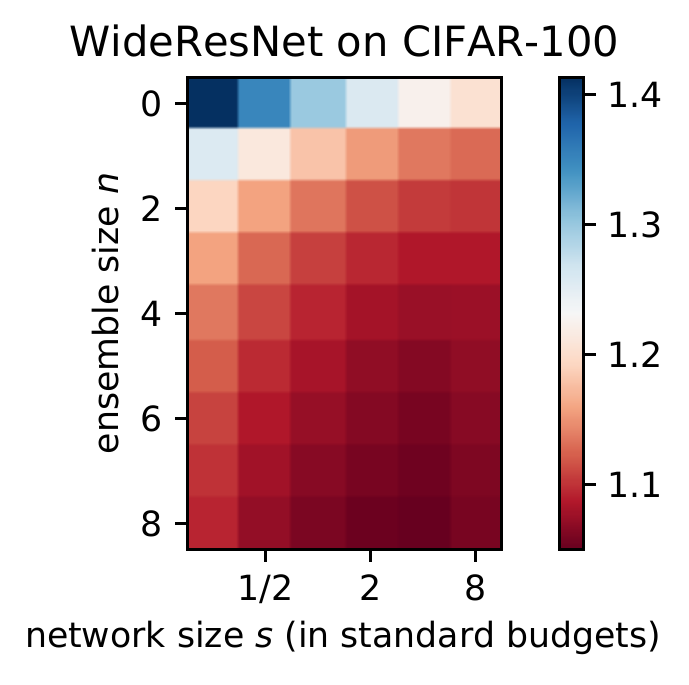}
  \end{tabular}}
  \caption{The $(n , s)$-plane of CNLL for WideResNet and VGG on CIFAR-100 for the setting without regularization.}
  \label{fig:noreg_carpet}
\end{figure}

\begin{figure}
  \centering
  \includegraphics[width=\textwidth]{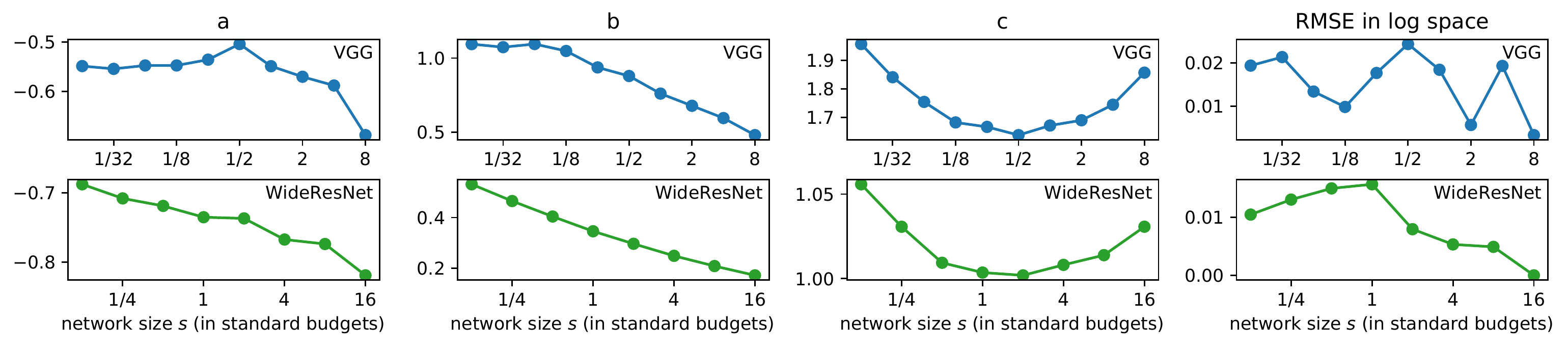}
  \caption{Parameters of power laws and the quality of approximation for $\mathrm{CNLL}_n$ for different network sizes $s$. VGG and WideResNet on CIFAR-100, setting without regularization.}
  \label{fig:cnll_ens_noreg}
\end{figure}

\begin{figure}
\centerline{
  \begin{tabular}{c}
 WideResNet on CIFAR-100\\
  \includegraphics[width=\textwidth]{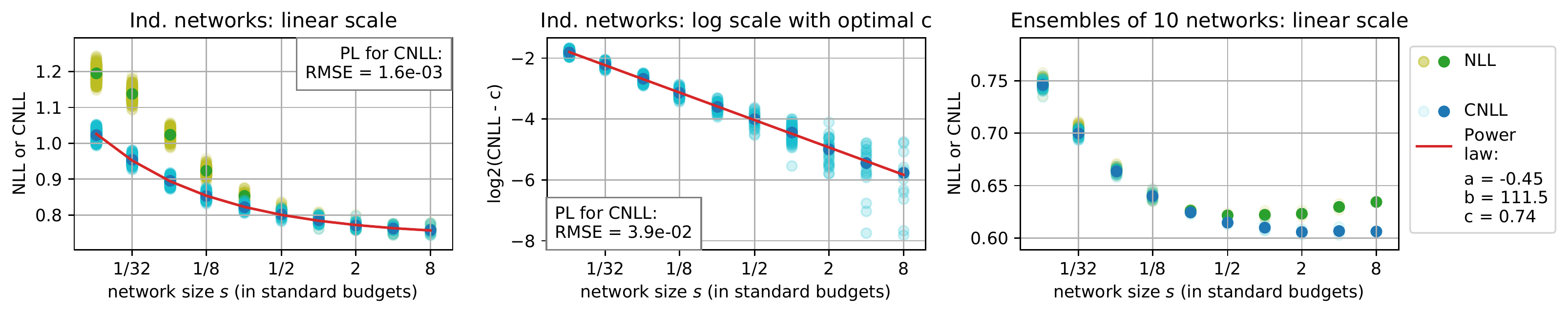}\\
VGG on CIFAR-10\\
\includegraphics[width=\textwidth]{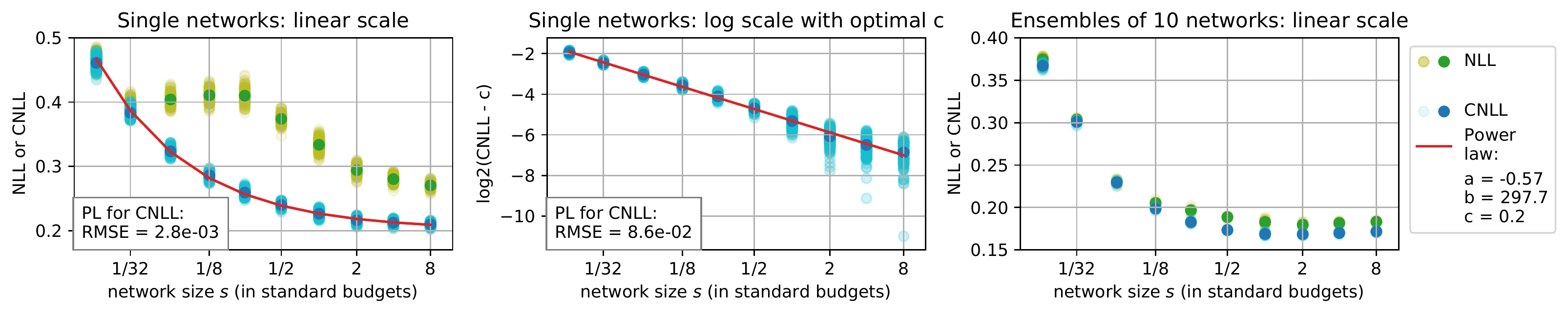}\\
WideResNet on CIFAR-10\\
  \includegraphics[width=\textwidth]{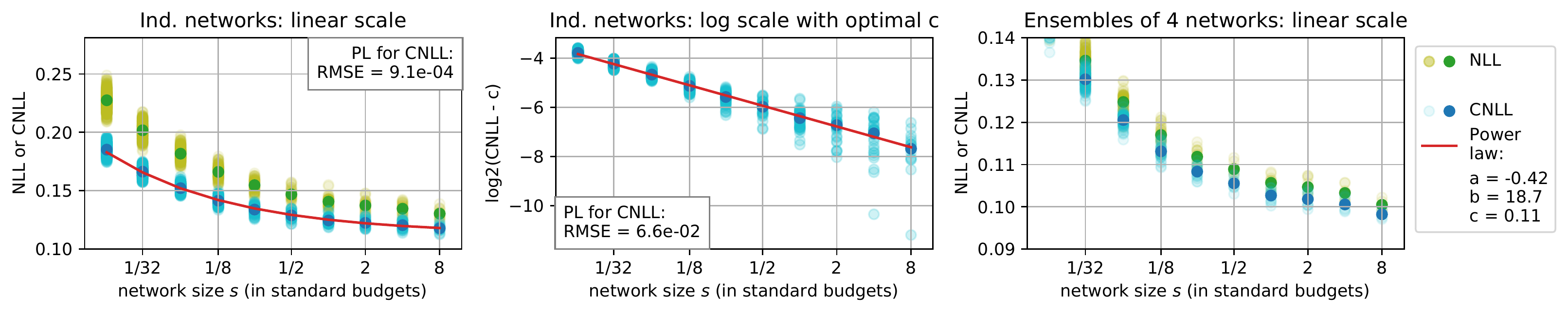}
  \end{tabular}}
  \caption{Non-calibrated $\mathrm{NLL}_s$ and $\mathrm{CNLL}_s$. Left and middle: for a single network, $\mathrm{NLL}_s$ may exhibit double descent, while $\mathrm{CNLL}_s$ can be closely approximated with a power law. Right: $\mathrm{NLL}_s$ and $\mathrm{CNLL}_s$ of an ensemble of several networks may be non-monotonic functions.}
  \label{fig:nll_network_size_other}
\end{figure}

\section{Power law approximation of NLL as a function of network size}
\label{app:network_size}

Figure~\ref{fig:nll_network_size_other} supplements figure~\ref{fig:nll_network_size}, to show that $\mathrm{CNLL}_s$ can be closely approximated with a power law on the whole considered range of network sizes $s$, for all considered dataset--architecture pairs, with parameter $a$ close to $-0.5$. For the non-calibrated NLL of VGG on CIFAR-10, we observe the same double descend behaviour as for VGG on CIFAR-100.

\section{Power law approximation of NLL as a function of the memory budget. MSA effect}
\label{app:budgets}
Figure~\ref{fig:budgets_other} supplements figure~\ref{fig:budgets}, to show that $\mathrm{CNLL}_B$ can be closely approximated with a power law on the whole considered range of memory budgets $B$, for all considered dataset--architecture pairs. The right column of the plots visualizes the MSA effect: for each memory budget $B$ (each line), the optimum of CNLL is achieved at abscissa $n>1$. The MSA effect also holds for accuracy for a wide range of budgets, including budgets less than the standard one, see figure~\ref{fig:msa_acc}. The optimal memory split is usually achieved for accuracy at the same $n$ or at the smaller $n$ than for CNLL.

\begin{figure}
  \begin{center}
\centerline{
 \begin{tabular}{c@{}c}
 \multicolumn{2}{c}{WideResNet on CIFAR-100}\\
 \includegraphics[height=26mm]{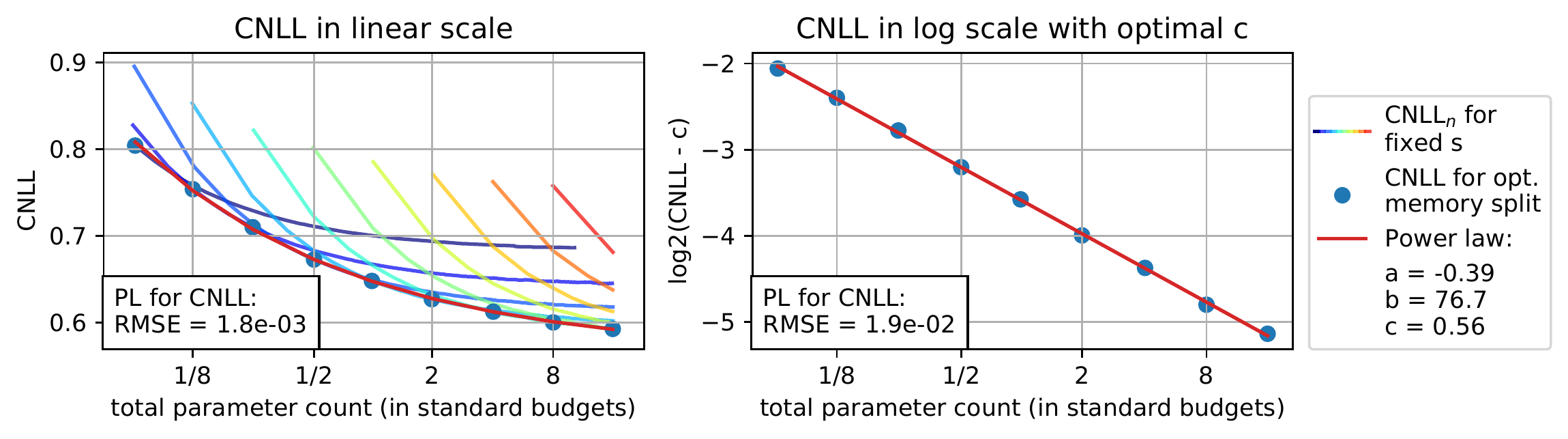}&
\includegraphics[height=26mm]{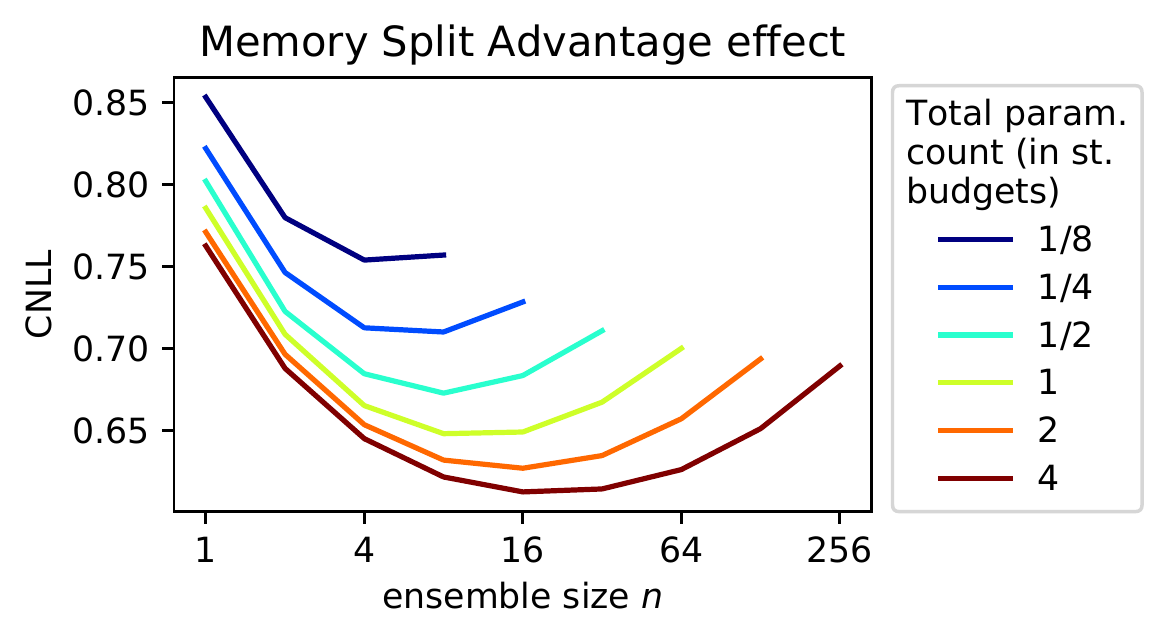}\\
 \multicolumn{2}{c}{VGG on CIFAR-10}\\
  \includegraphics[height=26mm]{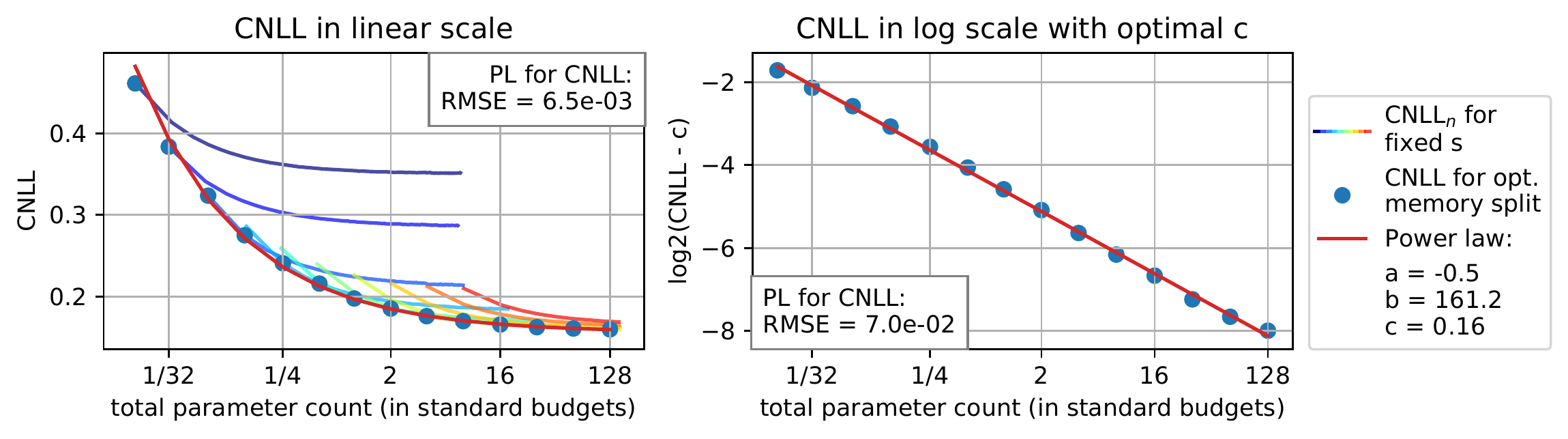}&
\includegraphics[height=26mm]{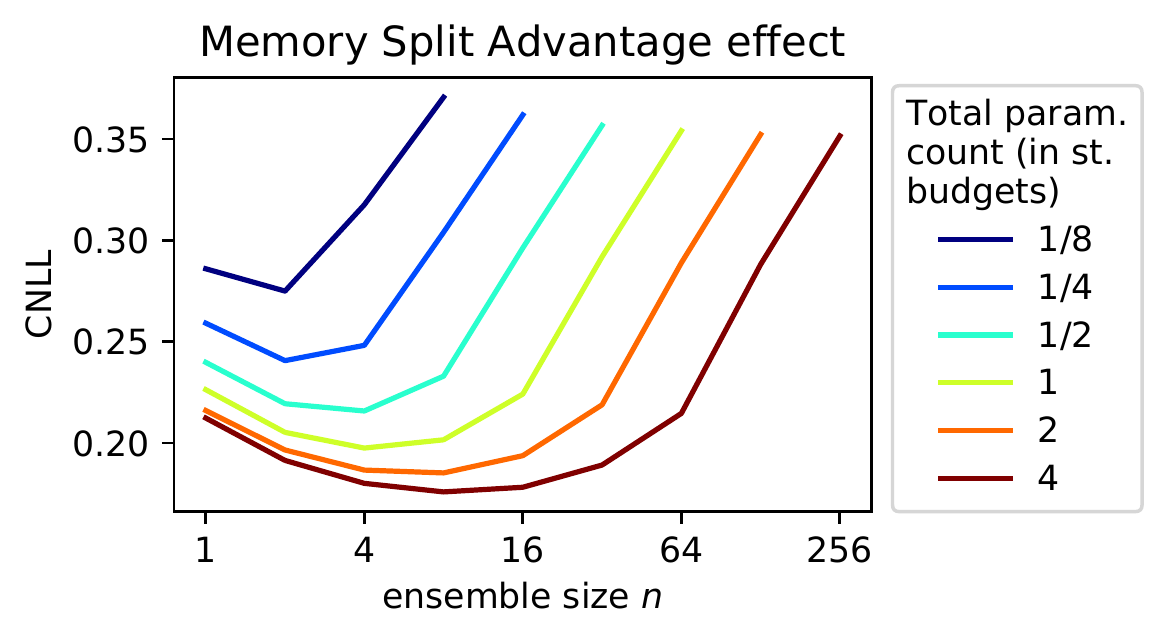}\\
\multicolumn{2}{c}{WideResNet on CIFAR-10}\\
\includegraphics[height=26mm]{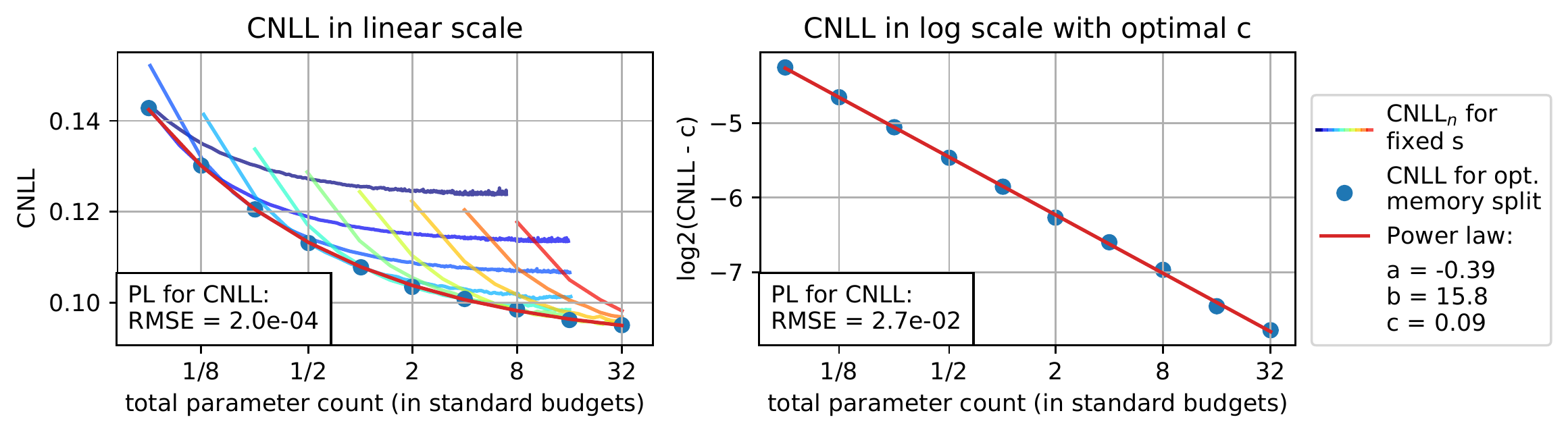}&
\includegraphics[height=26mm]{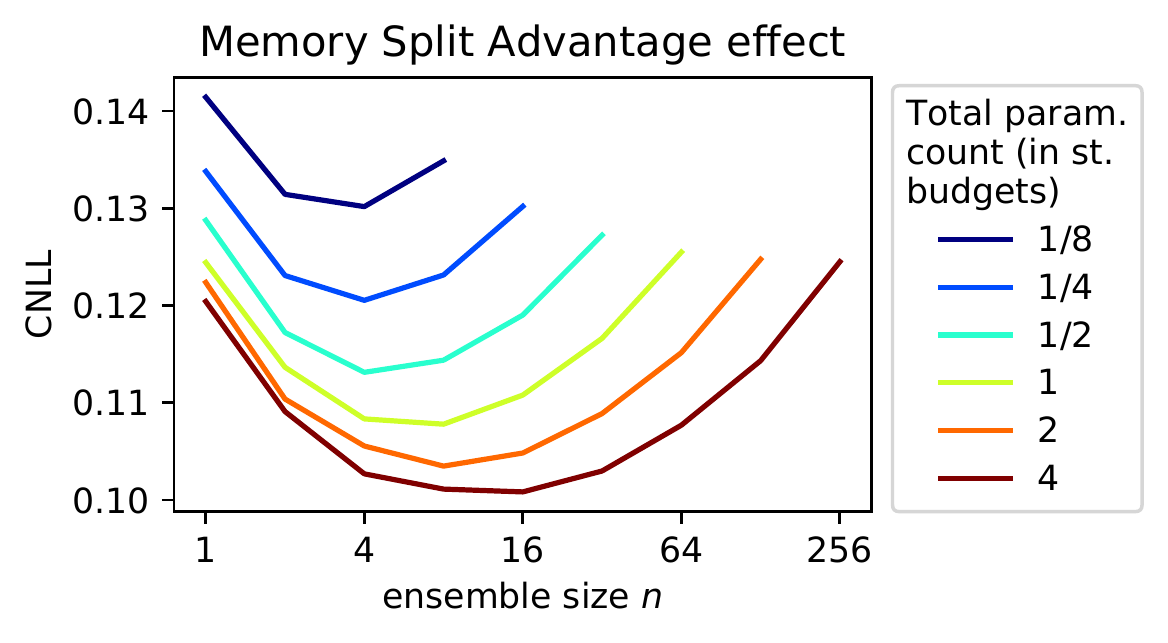}
  \end{tabular}}
  \caption{Left and middle: $\mathrm{CNLL}_B$ for different dataset--architecture pairs can be closely approximated with a power law. $\mathrm{CNLL}_B$ is a lower envelope of $\mathrm{CNLL}_n$ for different network sizes $s$. Right: MSA effect: for different memory budgets $B$, the optimal CNLL is achieved at $n>1$.}
  \label{fig:budgets_other}
  \end{center}
\end{figure}

\begin{figure}
  \begin{center}
\centerline{
 \begin{tabular}{c@{}c}
 VGG on CIFAR-100 & WideResNet on CIFAR-100\\
 \includegraphics[height=30mm]{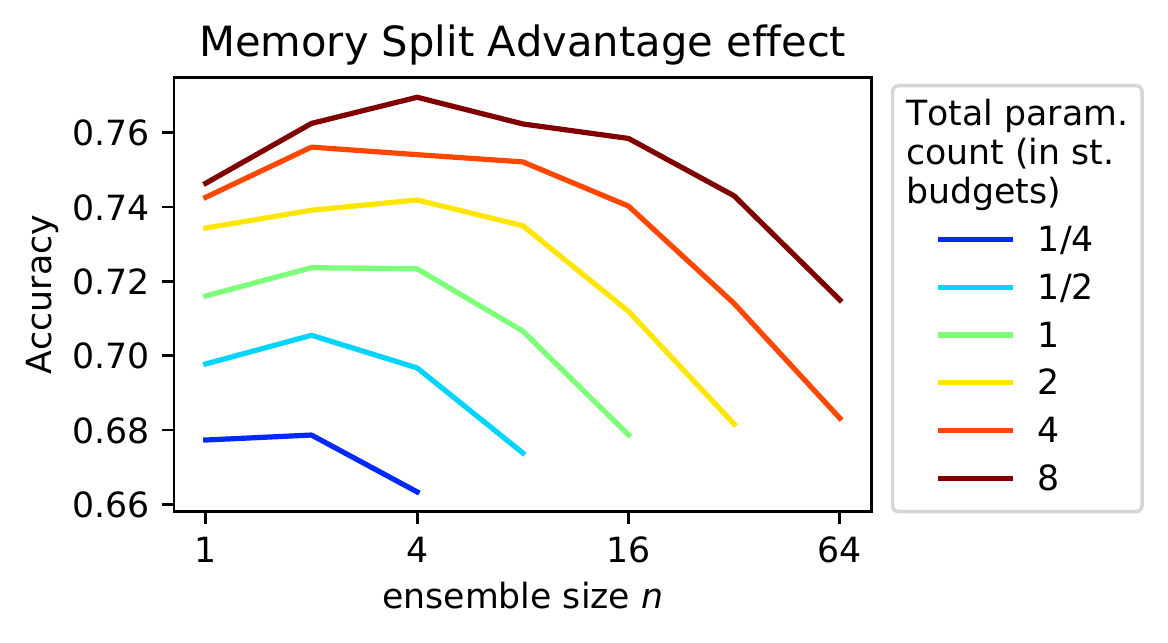}&
\includegraphics[height=30mm]{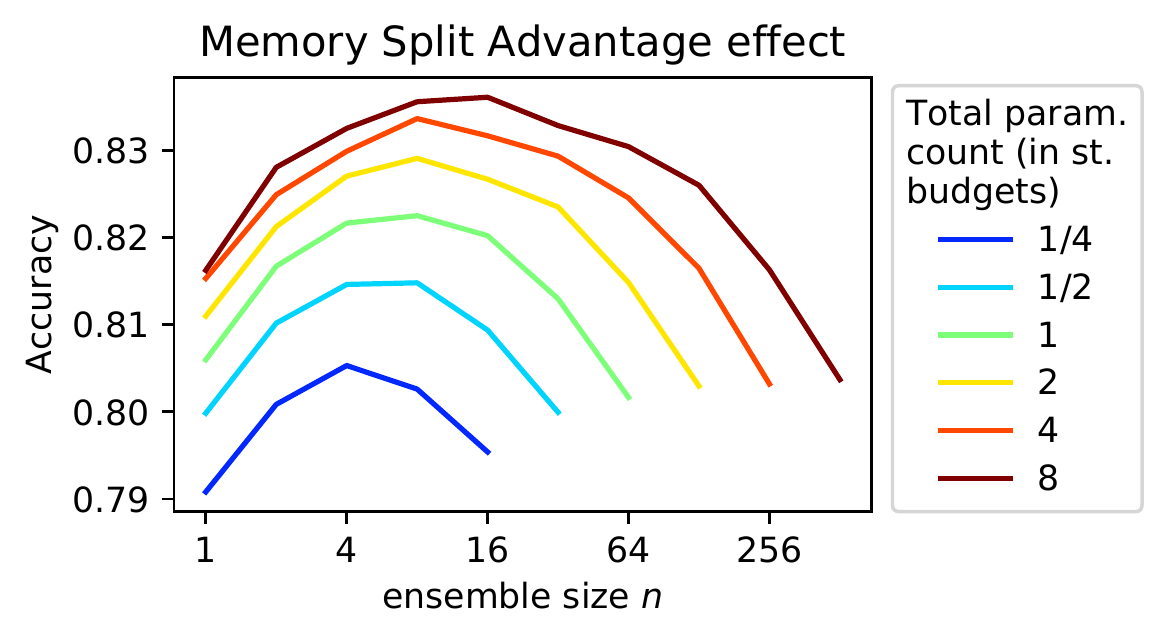}\\
 VGG on CIFAR-10 & WideResNet on CIFAR-10\\
  \includegraphics[height=30mm]{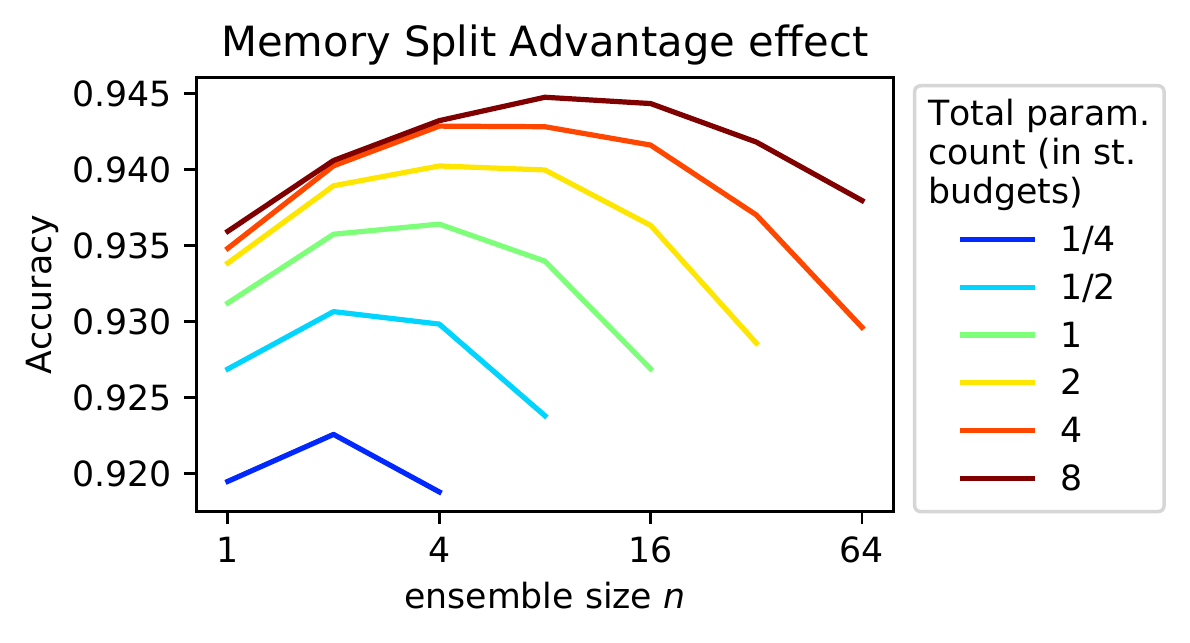}&
\includegraphics[height=30mm]{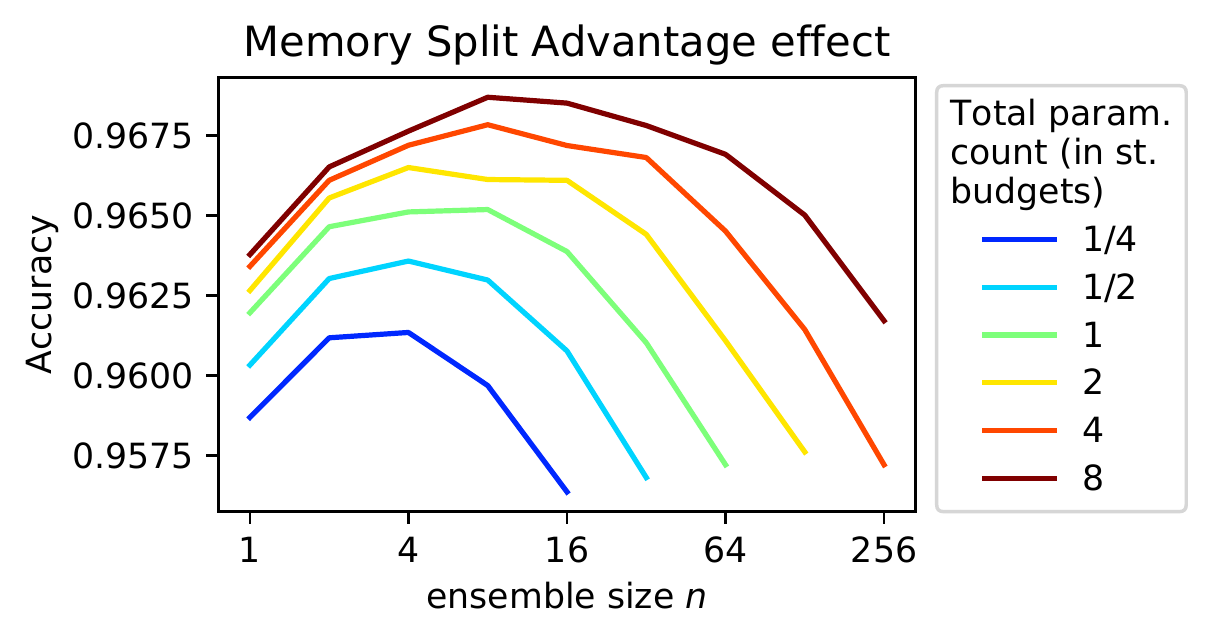}
  \end{tabular}}
  \caption{ MSA effect for accuracy: for different memory budgets $B$, the optimal accuracy is achieved at $n>1$.}
  \label{fig:msa_acc}
  \end{center}
\end{figure}

{\textbf{The timing analysis of the memory splitting procedure.}} One might wonder, how much slower is training and prediction with the ensemble of several medium-size networks compared to a single large network. In table~\ref{tab:timing}, we list the training and prediction time for a single VGG on CIFAR-100, for different network sizes. We conduct this experiment on GPU conducting training~/~prediction with several networks sequentially, using the training batch size of 64 and the testing batch size of 1024. We observe that using the memory split with the relatively small number of networks (which is the case in practice) is only moderately slower than using a single wide network.
For example, for budget $4S$, with a single network / memory split of 4 networks, testing takes 111 / 132 sec, while one training epoch takes 42 / 64 seconds. Please note that these numbers are given for the case when the networks in the memory split are trained~/~validated \textit{sequentially}, while the memory split allows \textit{parallel} training~/~validation.

\begin{table*}[ht!]
	\centering
	\begin{tabular}{c|c|c|c|c|c|c|c}
		Network size (in standard budgets) & 0.125 & 0.25 & 0.5 & 1 & 2 & 4 & 8 \\ 
		\hline
		Time of 1 training epoch & 7.8 & 8 & 11 & 16 & 27 & 42 & 80 \\ 
		Prediction time & 6.8 & 9.5 & 20 & 33 & 63 & 111 & 227 \\ 
	\end{tabular}
	\caption{Training and prediction time for a single VGG of different sizes on CIFAR-100.}\label{tab:timing}
\end{table*}

\begin{figure}
  \begin{center}
\centerline{
 \begin{tabular}{c@{}c@{}c@{}c}
 \includegraphics[width=0.25\textwidth]{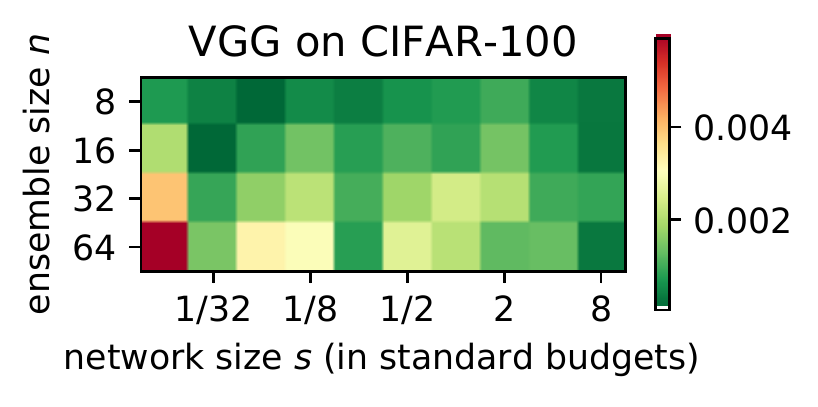}&
\includegraphics[width=0.25\textwidth]{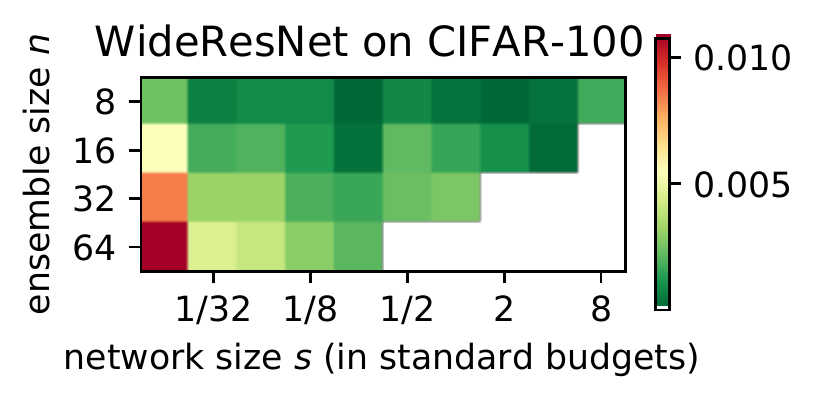}&
  \includegraphics[width=0.25\textwidth]{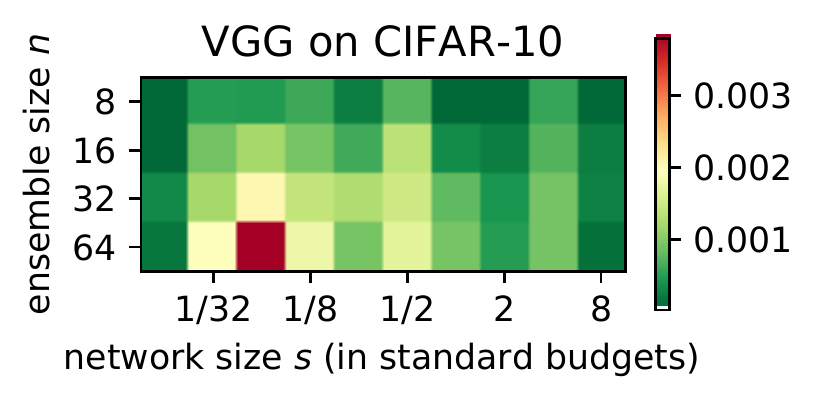}&
\includegraphics[width=0.25\textwidth]{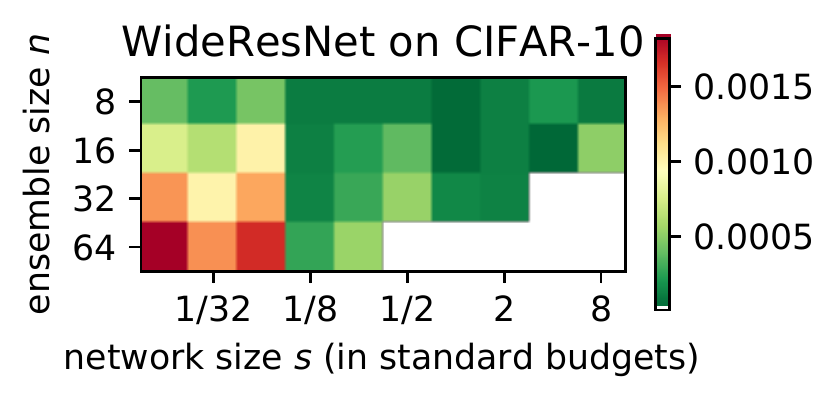}
  \end{tabular}}
  \caption{Predictions based on 
  $\mathrm{CNLL}_n$ power laws. Difference between true and predicted CNLL.
  Predictions are made for large $n$ based on $n = 1..4$ using all trained networks of each size, i.\,e.\,with averaging $\mathrm{CNLL}_n$ for $n=1..4$ over a large number of runs.}
  \label{fig:ideal_pred}
  \end{center}
\end{figure}

\begin{figure}
  \begin{center}
\centerline{
 \begin{tabular}{@{}c@{}c}
 \multicolumn{2}{c}{\footnotesize RMSE between true and predicted CNLL} \\
  \includegraphics[width=0.27\textwidth]{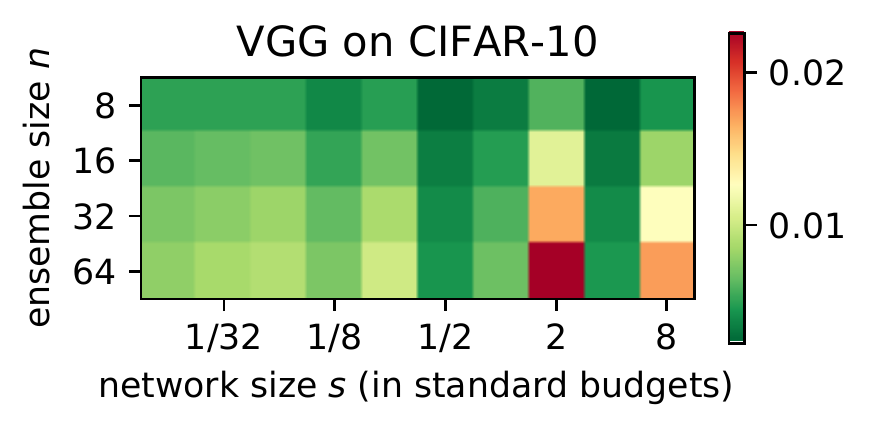}&
\includegraphics[width=0.27\textwidth]{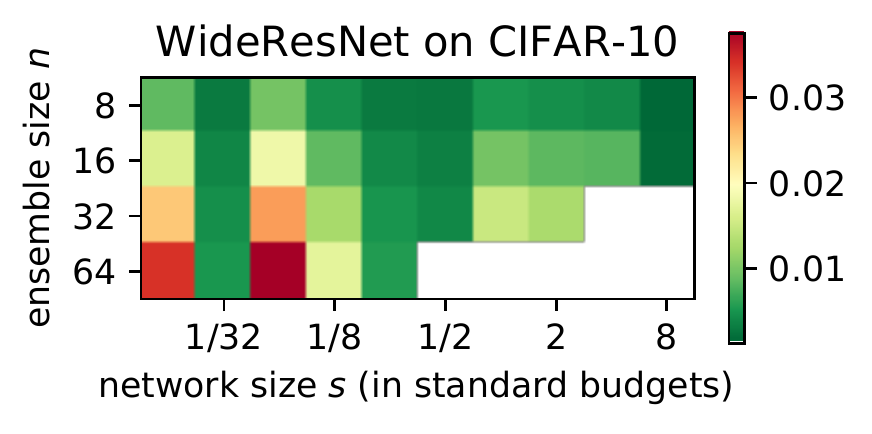} 
  \end{tabular}
  \begin{tabular}{@{}c@{}c}
  \multicolumn{2}{c}{\footnotesize Optimal memory splits: predicted vs true}\\
\includegraphics[width=0.22\textwidth]{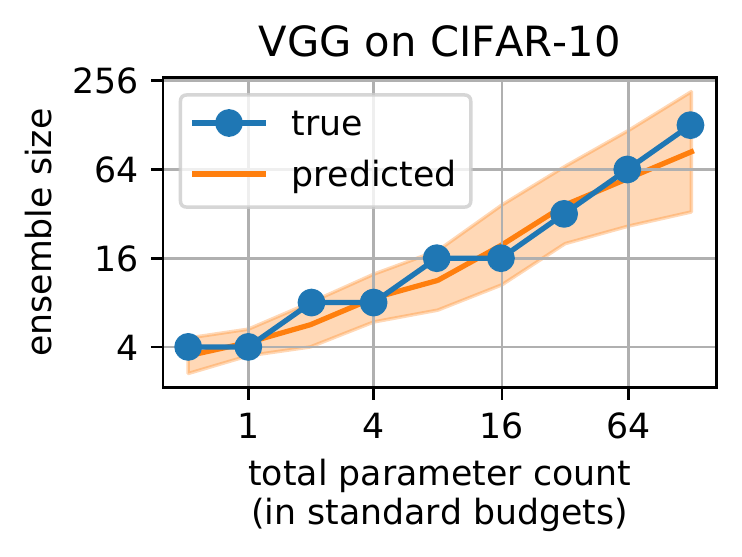}&
\includegraphics[width=0.22\textwidth]{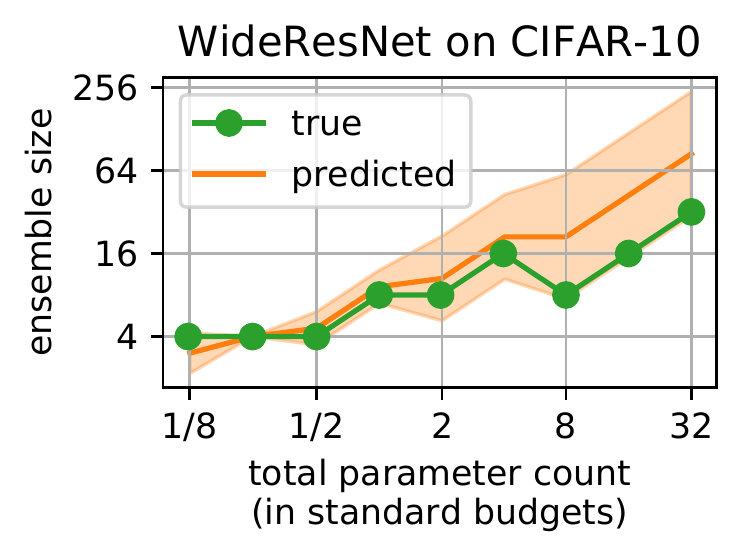}
  \end{tabular}}
  \caption{Predictions based on 
  $\mathrm{CNLL}_n$ power laws for VGG and WideResNet on CIFAR-10. Predictions are made for large $n$ based on $n = 1..4$ using 6 trained networks of each size. Left pair: RMSE between true and predicted CNLL. Right pair: predicted optimal memory splits vs true ones. Mean $\pm$ standard deviation is shown for predictions.}
  \label{fig:predictions_ensembles_other}
  \end{center}
\end{figure}

\section{Predictions based on power laws}
\label{app:prediction}

In this section, we expand the discussion on using the power laws observed in the paper for predicting the CNLL of large ensembles, and optimal memory splits for different memory budgets. For all predictions, we use the values of $\mathrm{CNLL}_n$ for $n=1..4$ for different values of $s$ as given data. 

We firstly conduct an experiment in which $\mathrm{CNLL}_n$ for $n=1..4$ is averaged over all available $\lfloor \frac \ell n \rfloor$ runs, see section~\ref{exp_setup} for details on the number of available runs. In this experiment, we predict $\mathrm{CNLL}_n$ for $n > 4$. Figure~\ref{fig:ideal_pred} reports the RMSE between the true and predicted $\mathrm{CNLL}_n$. The predictions are highly precise with the error 2--3 orders smaller than the values being predicted. This experiment provides an evidence that the power laws discussed in the paper can be used for prediction. However, the described setting is not practically applicable due to the use of a large number of networks.

The second experiment is practically oriented: we use only 6 networks of each network size $s$, and average $\mathrm{CNLL}_n$ for $n=1..4$ using only these 6 networks. The number 6 is chosen to provide more stable CNLL estimates for $n=1..3$. We average the errors in predictions over 10\,/\,5 runs for VGG\,/\,WideResNet. Figure~\ref{fig:predictions_ensembles_other} supplements figure~\ref{fig:predictions_ensembles} and shows the results for CIFAR-10. When predicting $\mathrm{CNLL}_n$ for $n>4$ (see the left pairs of plots in figures~\ref{fig:predictions_ensembles} and~\ref{fig:predictions_ensembles_other}), we obtain the error of 1-2 orders smaller than the values being predicted. The accurate prediction of  $\mathrm{CNLL}_n$ allows predicting the optimal memory splits for different memory budgets $B$, see the right pairs of plots in figures~\ref{fig:predictions_ensembles} and~\ref{fig:predictions_ensembles_other}. 

{\bf Finding optimal memory splits in practice.} Let's say we have a budget $B$ and want to find an optimal memory split (MS). Algorithm~\ref{alg:msa_predict} describes the procedure of finding the optimal MS using the discovered power laws.
With this algorithm, we need to train $\min(n,6)$ networks of size $B/n$ for $n = 1,\dots,n^*+1$ where $n^*$ is a number of networks in an optimal MS, and after finding $n^*$, we also need to train lacking networks for the optimal MS. 
If we do not use power-law predictions we can train MSs one by one (one network of size $B$, then two networks of size $B/2$, etc.) while the quality of the MS starts to degrade. In this case we need to train $n$ networks of size $B/n$ for $n = 1,\dots,n^*+1$. As a result, if $n^*\geqslant 4$, power-law predictions allow training fewer networks, and the higher $n^*$ the higher the gain.

\begin{algorithm}[t!]
	\begin{algorithmic}[1]
		\Require budget $B$ (the number of parameters);
		\Ensure the optimal memory split of budget $B$;
		\State $\mathrm{CNLL}_* = +\infty$, ~~$n_* = \mathrm{None}$;~~ $k=-1$;
		\Repeat 
		\State $k$ += $1$, $n=2^k$; 
		\State Train $\min(n,6)$ networks of size $B/n$;
		\If {$n > 4$} 
		\State Fit the parameters of power law on sequence $\{\widehat {\mathrm{CNLL}}_i\}_{i=1}^4$;
		\State Predict $\mathrm{CNLL}_n$ using power law;
		\Else 
		\State Compute $\mathrm{CNLL}_n$ using trained networks;
		\EndIf
		\If {$\mathrm{CNLL}_n < \mathrm{CNLL}_*$}
		\State $\mathrm{CNLL}_* = \mathrm{CNLL}_n$;~~ $n_* = n$;
		\State $\mathrm{found} = \mathrm{False}$; // updated minimum
		 \Else
		 \State $\mathrm{found} = \mathrm{True}$; // passed minimum
		\EndIf
        \Until $\mathrm{found}$ 
        \State Train the ensemble of $n_*$ networks of size $B/n_*$.
	\end{algorithmic}
	\caption{Finding optimal memory split using power laws}
	\label{alg:msa_predict}
\end{algorithm}

\section{Additional experiments with ImageNet}

\label{app:imagenet}
\citet{pitfalls} released the weights of the ensembles of ResNet50 networks (commonly used size) trained on ImageNet. We used their data to empirically confirm the power-law behaviour of NLL and CNLL as functions of the ensemble size. Figure~\ref{fig:imagenet} shows that the resulting power law approximations fit the data well, supporting the results presented in section~\ref{sec:ensemble_size}.

\begin{figure}[h]
  \begin{center}
\centerline{
 \begin{tabular}{c@{}c}
 \includegraphics[width=0.5\textwidth]{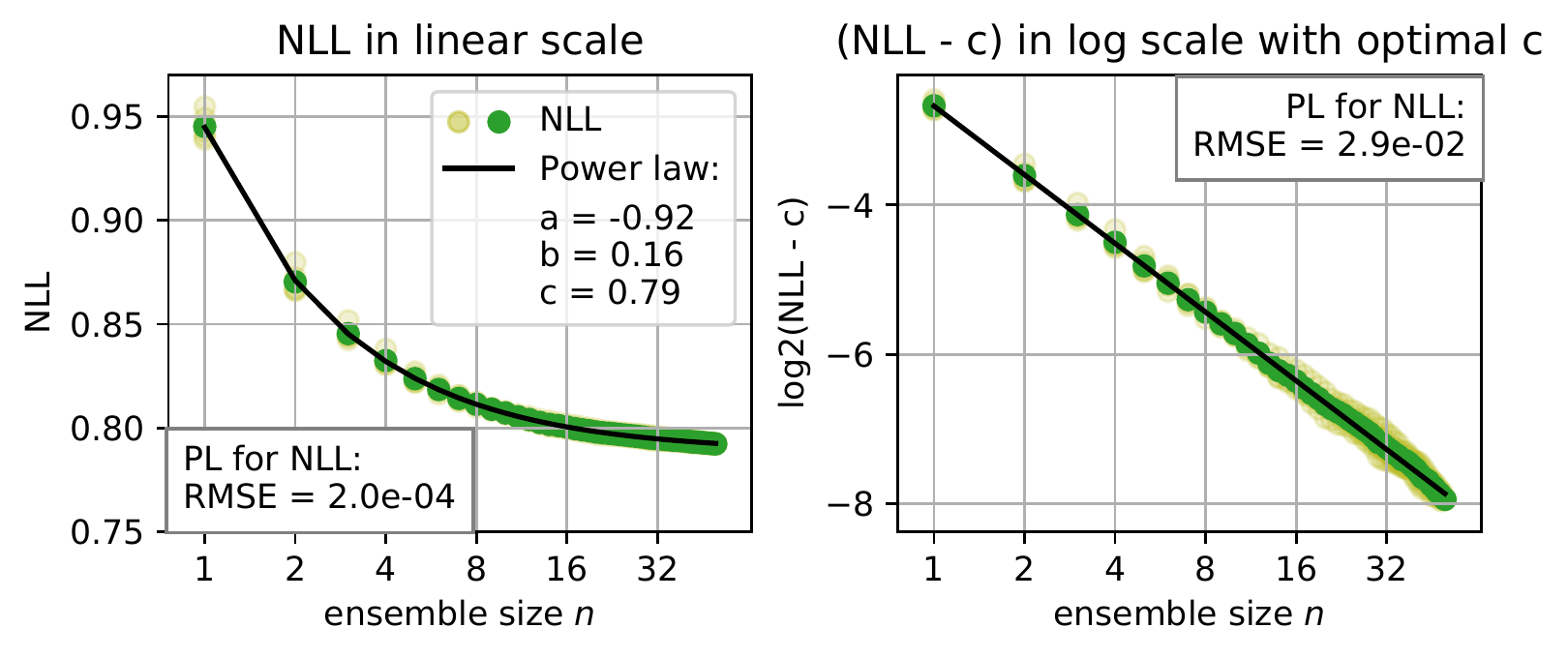}&
\includegraphics[width=0.5\textwidth]{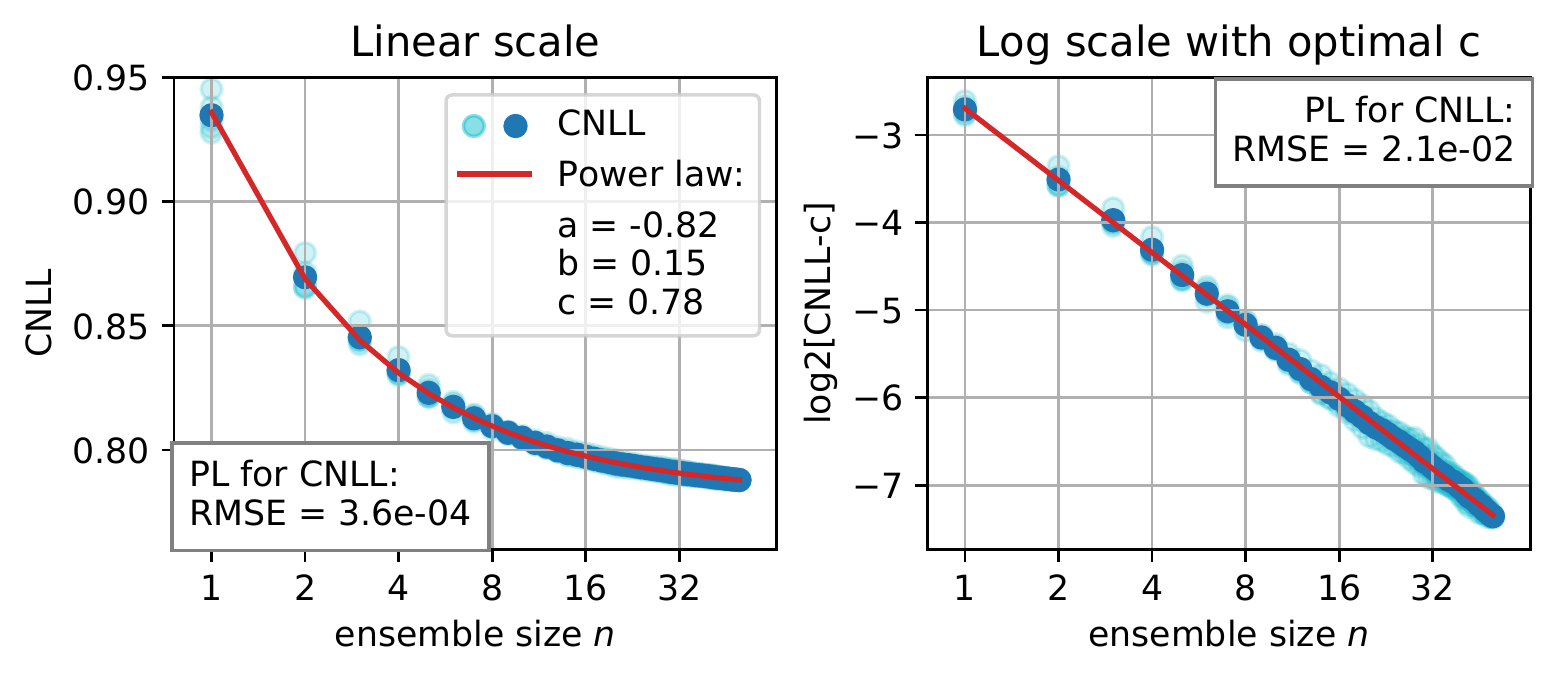}
  \end{tabular}}
  \caption{Non-calibrated NLL and CNLL of the ResNet50 of the commonly used size on ImageNet. Both $\mathrm{NLL}_n$ and $\mathrm{CNLL}_n$ can be closely approximated with a power law.}
  \label{fig:imagenet}
  \end{center}
\end{figure}

\end{document}